\newtheorem{thm}{Theorem}[section]
\newtheorem{lemma}[thm]{Lemma}
\newcommand{\algo}{CombU\xspace}
\begin{document}
% The file aaai.sty is the style file for AAAI Press 
% proceedings, working notes, and technical reports.
%
\title{CombU: A Combined Unit Activation for Fitting Mathematical Expressions with Neural Networks}
\author{Jiayu Li, Zilong Zhao\thanks{Contact: li.jiayu@nus.edu.sg, z.zhao@nus.edu.sg}\\
Betterdata AI\\
NUS AIDF\\
Singapore, Singapore\\
\And
Kevin Yee, Uzair Javaid\\
Betterdata AI \\
Singapore, Singapore
\And
Biplab Sikdar\\
NUS AIDF\\
NUS Department of Electrical \\and Computer Engineering \\
Singapore, Singapore
}
\maketitle
% \author{
% Anonymous Author(s)\\
% Anonymous Institution(s)\\
% }
\begin{abstract}
\begin{quote}
% An effective way of improving deep neural network's performance without changing the data or model architecture is changing the activation function.

% The activation functions lie in the core of neural networks, as they introduce the non-linearity of data relations, eventually empowering deep networks to approximate complex data relations.
% Existing attempts to modify activation functions to improve neural network's performances has been mainly focused on proposing new mathematical functions. However, we contend that well-designed combination of existing activation functions in the neural network can also achieve the goal. In this paper, we introduce \textbf{Comb}ined \textbf{U}nits activation (CombU) that uses different activation functions at different dimensions across different layers. This approach can be proven theoretically to fit most mathematical expressions perfectly, and experiment results also verify its advantage empirically.

The activation functions are fundamental to neural networks as they introduce non-linearity into data relationships, thereby enabling deep networks to approximate complex data relations. Existing efforts to enhance neural network performance have predominantly focused on developing new mathematical functions. However, we find that a well-designed combination of existing activation functions within a neural network can also achieve this objective. In this paper, we introduce the \textbf{Comb}ined \textbf{U}nits activation (\algo), which employs different activation functions at various dimensions across different layers. This approach can be theoretically proven to fit most mathematical expressions accurately. The experiments conducted on four mathematical expression datasets, compared against six State-Of-The-Art (SOTA) activation function algorithms, demonstrate that \algo outperforms all SOTA algorithms in 10 out of 16 metrics and ranks in the top three for the remaining six metrics. 
%Code can be found at: 
%\url{https://anonymous.4open.science/r/combu-2231}.
%empirically verify its advantages.

\end{quote}
\end{abstract}

\section{Introduction}

% The activation functions lie in the core of neural networks, as they introduce the non-linearity of data relations, which eventually empowers deep networks the ability to approximate complex data relations. 

In deep learning, an activation function is a mathematical function applied to the output of a neural network's node. It plays a critical role in determining whether a neuron should be activated, based on the weighted sum of its inputs, thereby introducing non-linearities into the model. This non-linearity is essential as it allows the neural network to learn complex patterns and representations from the data, which would be impossible with linear functions alone.
Since the earliest usage of Sigmoid activations~\cite{sigmoid}, and the revolutionary discovery of the effectivity and efficiency of Rectified Linear Unit (ReLU)~\cite{relu}, many different further optimized activation functions for different tasks are proposed, such as Tanh~\cite{tanh}, Leaky ReLU~\cite{lrelu}, ELU~\cite{elu}, SELU~\cite{selu}, NLReLU~\cite{nlrelu}, and GELU~\cite{gelu}. %\zz{add tanh and the citations that you intorduced in related works}

Most existing attempts have been done to change the activation function universally in a network, while very few attempts, if any, has been done to explore the capability of combining different activations in the same network. However, carefully designed combination may have the potential of making an effect of $1+1>2$.

In particular, using ReLU throughout an entire network essentially fits a piece-wise linear function, with sufficiently large number of pieces, may approximate real-world scenarios well enough empirically. Using other linear unit kind of activation functions also results in piece-wise functions but some pieces may be small curves. None of the activation functions, if used universally throughout an entire network, can provide a perfect fit of very simple and ubiquitous data relations expressed in mathematics, including quadratic, exponential and logarithmic relations and compositions of them. However, fitting these simple mathematical expressions is not out of the capability of neural networks if the activation functions can be carefully designed and combined.

In this paper, we propose \textbf{Comb}ined \textbf{U}nits activation (CombU), that applies different activation functions
%\zz{is that possible to be more specific here? like which activations are used in this combination? the word 'different' is already used in abstract, here can be more specific.} 
at different dimensions of different layers. This novel combination is proven to have the potential of perfectly fitting common mathematical expressions in theory, when it effectively combines activations that has linear, exponential, and logarithmic components. In the following sections, we will prove this claim theoretically, and formally state the design of CombU. Experiments are done to verify the capability of CombU on fitting existing complex mathematical expressions. Results show that it outperforms SOTA activation functions 10 out of 16 metrics on understanding mathematical expressions. 
%\zz{add results summary here, should include one or two significant numbers or achievement} 
Experiments also demonstrate CombU's general advantage as multilayer perception (MLP) activations in tabular tasks, including classification, regression, and generation, which are believed to embed some implicit mathematical relations. To summarize, our contributions are as follows:

% \zz{need to fill in the informations for following points.}
\begin{itemize} 
    \item We propose a new activation function - \algo that mixes ReLU, ELU, and NLReLU activation functions in all intermediate activation network layers. 
    % \zz{should have some details about \algo}

    \item Theoretical proof of \algo is provided to show that it can perfectly fit common mathematical expressions.

    \item Network with \algo is tested on both prediction tasks (classification and regression) and generation tasks. \algo is compared to 6 SOTA algorithms across 14 datasets. Results show that, on average, \algo outperformed all six SOTA algorithms on prediction tasks involving mathematical expressions and classification datasets, as well as on generation tasks involving two tabular datasets. For the regression dataset, \algo achieves the best performance on one metric and ranks second on the remaining metric.
    % \item We propose CombU that uses different activation functions throughout a network. Without changing other network architecture, CombU enables deep networks to fit simple mathematical expressions perfectly in theory.
    % \item Experiments demonstrate the advantage of CombU in fitting mathematical expressions and real-world datasets empirically. 
\end{itemize}

% Experiments also demonstrate the advantage of CombU in fitting mathematical expressions and real-world datasets empirically. 

\section{Related Work}

The very first neural networks used Sigmoid function for activation~\cite{sigmoid}, which is bounded in $[0,1]$, for doing binary decisions. Later, it found that Rectified Linear Unit (ReLU)~\cite{relu}, which is non-smooth and non-probabilistic yet easier and more efficient, shows great advantage on deep neural networks by performing as some decision gates. This discovery has been the foundation of modern neural networks and most subsequent research efforts on activation functions.

Most effort has been done to propose new unary activation functions, including SoftPlus~\cite{relu}, Tanh~\cite{tanh}, Leaky Rectified Linear Unit (LReLU)~\cite{lrelu}, Exponential Linear Unit (ELU)~\cite{elu}, Scaled Exponential Linear Unit (SELU)~\cite{selu}, Sigmoid Linear Unit (SiLU, also known as Swish)~\cite{swish}, Natural Logarithmic Rectified Linear Unit (NLReLU)~\cite{nlrelu}, and Gaussian Error Linear Unit (GELU)~\cite{gelu}. We summarize the activation functions mentioned above in Table~\ref{tab:related-work}.

\begin{table}[h!t]
    \centering
    % \small
    \Large
    % \tiny
    % \setlength{\tabcolsep}{8pt}
    % \renewcommand{\arraystretch}{1.2} % Adjust this value to increase/decrease row spacing
    \setlength{\extrarowheight}{5pt}
    \resizebox{1\columnwidth}{!}{
    \begin{tabular}{llll}
        \toprule
        Name & Function ($f(x)$) & Derivative ($f'(x)$) & Range \\
        \midrule
        Sigmoid & \scalebox{1.5}{$\frac{1}{1+e^{-x}}$} & $f(x)\cdot(1-f(x))$ & $(0,1)$ \\[5pt]\hline
        ReLU & $\max(0,x)$ & $\begin{cases}
            1&\text{ if }x>0\\0&\text{ if }x<0
        \end{cases}$ & $[0,+\infty)$ \\[5pt]\hline
        SoftPlus & $\ln(1+e^x)$ & \scalebox{1.5}{$\frac{1}{1+e^{-x}}$} & $(0,+\infty)$\\[5pt]\hline
        Tanh & \scalebox{1.5}{$\frac{e^x-e^{-x}}{e^x+e^{-x}}$} & $1-f(x)^2$ & $(-1,1)$ \\[5pt]\hline
        LReLU ($a\in[0,1])$ & $\begin{cases}
            x&\text{ if }x\ge0\\ax&\text{ if }x<0
        \end{cases}$ & $\begin{cases}
            1&\text{ if }x>0\\a&\text{ if }x<0
        \end{cases}$ & $(-\infty,+\infty)$ \\[5pt]\hline
        ELU ($\alpha\in[0,1])$& $\begin{cases}
            x&\text{ if }x>0\\\alpha(e^x-1)&\text{ if }x\le0
        \end{cases}$ & $\begin{cases}
            1&\text{ if }x>0\\f(x)+\alpha&\text{ if }x<0
        \end{cases}$ & $(-\alpha,+\infty)$\\[5pt]\hline
        SELU ($\lambda>1$)& $\lambda\cdot\text{ELU}(x)$ & $\lambda\cdot\text{ELU}'(x)$ & $(-\lambda\alpha,+\infty)$  \\[5pt]\hline
        Swish ($\beta\ge0$) & \scalebox{1.5}{$\frac{x}{1+e^{-\beta x}}$} & \scalebox{1.5}{$\frac{1+e^{-\beta x}+\beta xe^{-\beta x}}{(1+e^{-\beta x})^2}$} & $[\delta,+\infty)$ \\[5pt]\hline
        NLReLU ($\beta>0$) & $\ln(\beta\cdot\max(0,x)+1)$ & $\begin{cases}
            \frac{\beta}{\beta x+1}&\text{ if }x>0\\0&\text{ if }x<0
        \end{cases}$ & $[0,\infty)$\\[5pt]\hline
        GELU & $x\Phi(x)$ & $\Phi(x)+x\phi(x)$ & $[\delta,+\infty)$ \\
        \bottomrule
    \end{tabular}
    }
    \caption{Existing activation functions. Activation functions are denoted $f(x)$. Any other unknowns than $x$ are hyper-parameters of the activation functions, whose range is stated with name. Function $\phi$ and $\Phi$ denote the probability density and cumulative density functions of Gaussian distributions respectively. $\delta$ is a constant value for the activation function, which is the solution for $x$ such that $f'(x)=0$.}
    \label{tab:related-work}
\end{table}

There is a special group of activation functions that are parameterized, with tune-able parameters, such as Parametric Rectified Linear Unit (PReLU)~\cite{prelu} that tunes $a$ in LReLU, Parametric Exponential Linear Unit (PELU)~\cite{pelu} that tunes $\alpha$ in ELU. 
There has also been work on exploring different activation functions. For example, the work of Swish~\cite{swish} actually uses reinforcement learning techniques to search for a best activation function. However, no work has been done to put different activations in different dimensions in the same layer across different layers in the same neural network.

Note that in this paper, we focus on \textbf{fitting} mathematical expressions, rather than \textbf{understanding} them in natural language, so that works on language models for mathematical reasoning do not fall under the realm of discussion of this paper.

\section{CombU Explained}

\subsection{CombU Formalization}

The proposed method is not an activation function per se, but a strategy of combining different activation functions. The implementation is straightforward, where each internal activation layer's dimensions are assigned with different activation functions based on a designated proportion. For example, with a CombU with 0.5 of activation A, 0.25 of activation B, and 0.25 of activation C, then given an activation layer on input data with $D$ dimensions, $0.5D$ randomly selected dimensions will be activated by A, another $0.25D$ other randomly selected dimensions will be activated by B, and the rest $0.25D$ dimensions will be activated by C. Some rounding may be applied if the dimensions are not integers. 

Formally, suppose the input is a vector $\mathbf{x}\in\mathbb{R}^D$, and activation functions considered are $\mathbf{g}=[g_1,g_2,\dots,g_k]$. Before training starts, what activation functions need to be used should be determined. It can be expressed as a series of masks $\mathbf{M}=[\mathbf{m}_1,\mathbf{m}_2,\dots,\mathbf{m}_k]\in\{0,1\}^{k\times D}$, with the constraint that $\sum_{i=1}^k\mathbf{m}_i=\mathbf{1}=[1,1,\dots,1]$. Then, CombU can be expressed as Equation~\ref{eq:combu}.

\begin{equation}
\label{eq:combu}
    \text{CombU}_{\mathbf{g},\mathbf{M}}(\mathbf{x})=\sum_{i=1}^k\mathbf{m}_i^Tg_i(\mathbf{x})
\end{equation}

\subsection{CombU Motivation}

Almost all existing neural-network-based research works, from simple MLPs to complex Large Language Models (LLMs), use a single activation function throughout all dimensions of all layers (excluding the last layer). However, we contend that using different activation functions combined has some benefits. 

The intuition is that neural networks using ReLU~\cite{relu} or its variants, such as Leaky ReLU~\cite{lrelu} and PReLU~\cite{prelu}, theoretically approximates the result by piece-wise linear functions. However, when domain experts develop glass-box methods using advanced empirical and analytical approaches, the derived theoretical or estimated functions are hardly ever piece-wise linear functions with a very high number of pieces, but rather, more complex functions involving polynomials, quotients, exponents, and logarithms. Despite the practical powerfulness of deep neural networks, simple mathematical expressions made up of these non-linear functions can never be perfectly fitted using a neural network with a uniform activation function.

Nevertheless, deep neural networks \textbf{\textit{can}} be used to perfectly fit these common mathematical expressions if different activation functions can be applied to different dimensions and at different layers. As long as different units in the same layer always include some linear, some exponential, and some logarithmic components, all the mathematical expressions mentioned above can be expressed by a neural network perfectly. 

% \subsection{Combination Choice}

% In light of the requirement of existence of all linear, exponential, and logarithmic components in the activation of each layer, we select a CombU with 0.5 of ReLU~\cite{relu} for linear component, 0.25 of ELU~\cite{elu} (with $\alpha=1$) for linear and exponential component, and 0.25 of NLReLU~\cite{nlrelu} (with $\beta=1$) for linear and logarithmic component, as the default choice of CombU. 

\subsection{Combination Choice and Theoretical Analysis}
\label{sec:theo:proof}

% In this section, we will present a proof of the claim that CombU can perfectly fit polynomial, quotient, exponential, and logarithmic expressions. 

In light of the requirement of existence of all linear, exponential, and logarithmic components in the activation of each layer, we select a CombU with 0.5 of ReLU~\cite{relu} for linear component, 0.25 of ELU~\cite{elu} (with $\alpha=1$) for linear and exponential component, and 0.25 of NLReLU~\cite{nlrelu} (with $\beta=1$) for linear and logarithmic component, as the default choice of CombU. 
%We will take the combination choice above for illustration purpose, where ReLU~\cite{relu} is used for linear components, ELU~\cite{elu} is used for exponential components, and NLReLU~\cite{nlrelu} is used for logarithmic components. 

A sketch of the proof is that exponential and logarithmic components are readily captured by existence of ELU and NLReLU, and polynomials and quotients can be captured by exponential after linear combination of logarithmic, for example, $x_1^ax_2^b=\exp(a\ln x_1+b\ln x_2)$, where $a,b>0$ gives a polynomial representation, and $a>0,b<0$ gives a quotient representation. 

%We will provide a rigorous constructive proof as follows. 
% Following we provide a rigorous constructive proof.
% Because this is a theoretical analysis,
We present a rigorous constructive proof below. Given that this is a theoretical analysis, 
we assume there are no limitations on the number of layers and the size of hidden layers, as they can always be set to be sufficiently large. Several basic and straightforward lemmas are assumed without proof, as their rigorous proofs can be found in \nameref{app:proof}.

%\zz{please check the correctness of editing in next paragraph.} correct
We use $L$ to represent the total number of layers in the MLP. The variable $\mathbf{x}^{[l]}$ represents the input at layer $l,l\in\{1,\dots,L\}$ before activation (where the MLP output before the final activation layer, such as softmax or otherwise, is $\mathbf{x}^{[L]}$). The variable $\mathbf{z}^{[l]},l\in\{0,\dots,L-1\}$ represents the output after activation (the raw input is thus $\mathbf{x}=\mathbf{z}^{[0]}$). Let the size of each layer be $D_l,l\in\{1,2,\dots,L\}$. Let $\boldsymbol{\Theta}$ represent all trainable parameters of the model, where $\boldsymbol{\theta}^{[l]}$ represents the parameters for layer $l$, and $\boldsymbol{\theta}^{[l]}_{ij}$ is the weight from $y^{[l-1]}_i$ to $x^{[l]}_j$, and $i=0$ refers to the bias. To write expressions succinctly, we use $R$ to represent ReLU, $E$ to represent ELU (with $\alpha=1)$, $N$ to represent NLReLU (with $\beta=1$). 
%In this section, we talk about the MLP without the last activation layer, which can be task-specific, for instance, softmax for classification tasks. Also, we make the assumption of finite dataset size, so that some range on $\mathbf{x}$ can be achieved empirically, such that $\mathbf{x}\in(-M,M)^{D_0}$ for some sufficiently large $M\in\mathbb{R}$, and the minimum non-zero absolute value threshold such that $\min_{x\in\mathbf{x}\setminus\{0\}}(|x|)>\delta$. 
In this section, we focus on the MLP without the final activation layer, which is task-specific (e.g., softmax for classification tasks). 
% We also assume a finite dataset size, so that some range on $\mathbf{x}$ can be achieved empirically, such that $\mathbf{x}\in(-M,M)^{D_0}$ for some sufficiently large $M\in\mathbb{R}$, and the minimum non-zero absolute value threshold such that $\min_{x\in\mathbf{x}\setminus\{0\}}(|x|)>\delta$. 
% \zz{check the correctness of following sentence, I have edited it ANS: correct}
% \zz{
We also assume a finite dataset size, allowing us to empirically establish a range for $\mathbf{x}$, such that $\mathbf{x}\in(-M,M)^{D_0}$ for some sufficiently large $M\in\mathbb{R}$. Additionally, we define a minimum non-zero absolute value threshold, ensuring that $\min_{x\in\mathbf{x}\setminus\{0\}}(|x|)>\delta$.
% } 
Thus, the domain for each value is $\mathcal{D}=((-M,M)\setminus(-\delta,\delta))\cup\{0\}$.

\begin{lemma}
    \textbf{Capability for Exponential Expressions.} Exponential function can be constructed by a network. Formally writing, given $x\in\mathcal{D}, \exp(x)$ can be constructed as an output of a network. 
\end{lemma}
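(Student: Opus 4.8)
The plan is to exploit the exponential branch of the ELU activation $E$. With $\alpha = 1$ one has $E(x) = e^{x} - 1$ whenever $x \le 0$, so on the non-positive axis ELU already encodes the exponential up to an additive constant. The difficulty is that this behaviour is confined to non-positive arguments: for $x > 0$ the unit is purely linear and retains no exponential information. Since $\exp(x)$ must be reproduced for every $x \in \mathcal{D} \subset (-M, M)$, the construction must funnel all inputs, including positive ones, through the negative branch of $E$.

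First I would use the finiteness of the domain to enforce exactly this. Because every coordinate obeys $x < M$, the shifted quantity $x - M$ is strictly negative for all admissible inputs, so a single affine map (weight $1$, bias $-M$) carries $x$ into the exponential regime of ELU. Applying $E$ there gives
\begin{equation}
E(x - M) = e^{x-M} - 1,
\end{equation}
an identity that holds uniformly over $\mathcal{D}$ with no case split, since $x - M \le 0$ always.

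It then remains to invert the shift and strip the additive constant with one more affine combination. As $M$ is fixed, $e^{M}$ is a constant absorbable into the weights of the following layer, so
\begin{equation}
e^{M}\bigl(E(x - M) + 1\bigr) = e^{M} e^{x-M} = e^{x}.
\end{equation}
The subsequent layer therefore assigns weight $e^{M}$ to the ELU output together with bias $e^{M}$, recovering $\exp(x)$ exactly. Because CombU reserves a fixed proportion of each layer's dimensions for ELU and the hidden widths may be taken arbitrarily large, one such ELU-activated coordinate is always available to perform this computation.

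I expect the only real obstacle to be the opening observation that the bounded-domain assumption $x \in (-M, M)$ is precisely what allows the two-branch ELU to be collapsed into a single exponential formula through the shift $x \mapsto x - M$; everything after that is a routine affine rescaling, and the $\delta$-threshold in $\mathcal{D}$ plays no role here (it will matter only for the logarithmic construction). Without the bounded domain the shift trick fails for large positive $x$, and one would be forced instead to write $\exp(x) = 1/\exp(-x)$ and invoke a reciprocal construction not yet established at this point.
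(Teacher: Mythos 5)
Your proposal is correct and follows essentially the same route as the paper: shift by $-M$ so that every input lands in the exponential branch of ELU, then undo the shift with an affine map of weight $e^{M}$ and bias $e^{M}$. The only difference is that the paper's construction also subtracts a term $R(x-M)$, which is identically zero on the domain $\mathcal{D}\subset(-M,M)$, so your version is a slight simplification of the same argument.
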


\begin{proof}
    \begin{align}
        &E(x-M)-R(x-M)\\
        =&\begin{cases}
            (x-M)-(x-M)&\text{ if }x-M\ge0\\
            \exp(x-M)-1-0&\text{ if }x-M<0
        \end{cases}\\
        =&\exp(x-M)-1
    \end{align}

    by definition of $M$. Then,

    \begin{align}
        &(E(x-M)-R(x-M)+1)\cdot\exp(M)\\
        =&\exp(x-M)\cdot\exp(M)=\exp(x)
    \end{align}

    Therefore, suppose $x$ is the only input dimension, $\exp(x)$ can be constructed with $\theta^{[1]}_{0\cdot}=-M, \theta^{[1]}_{1\cdot}=1,g^{[1]}_1=E,g^{[1]}_2=R,\theta^{[2]}_{11}=\exp(M),\theta^{[2]}_{21}=\theta^{[2]}_{01}=-\exp(M)$.
\end{proof}

\begin{lemma}
    \textbf{Capability for Logarithmic Expressions.} Logarithmic function can be constructed by a network. Formally writing, given $x\in\mathcal{D}, \ln(x)$ can be constructed as an output of a network. 
\end{lemma}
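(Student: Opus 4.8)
The plan is to exploit the NLReLU activation $N$, which on positive inputs computes $N(y) = \ln(y+1)$, and to recover $\ln(x)$ by inverting the ``$+1$'' shift. Since $\ln$ is only defined for positive arguments, I treat the positive branch of the domain, $x \in (\delta, M)$. The naive attempt $N(x-1) = \ln(x)$ works only when $x > 1$: for $0 < x < 1$ the argument $x - 1$ is negative, and the inner $\max(0,\cdot)$ in NLReLU clips it to zero, returning $\ln(1) = 0$ instead of $\ln(x)$. Resolving this clipping is the crux of the argument.

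To avoid clipping, I would first rescale $x$ by a large constant so that the argument fed into $N$ is guaranteed positive across the whole domain, then undo the rescaling using the additivity $\ln(cx) = \ln c + \ln x$. Concretely, choose $c = 1/\delta$ (any $c \ge 1/\delta$ works). Because $x > \delta$ on the positive part of $\mathcal{D}$, we have $cx > 1$, hence $cx - 1 > 0$, so the inner $\max$ is inactive and
\[
N(cx - 1) = \ln\bigl((cx - 1) + 1\bigr) = \ln(cx) = \ln c + \ln x .
\]
Subtracting the constant $\ln c$ then yields $\ln x = N(cx - 1) - \ln c$ exactly.

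This translates to a two-layer construction mirroring the exponential lemma: in the first layer set the weight $\theta^{[1]}_{1\cdot} = c$ and bias $\theta^{[1]}_{0\cdot} = -1$ with activation $g^{[1]}_1 = N$, producing $N(cx - 1) = \ln(cx)$; in the second layer set $\theta^{[2]}_{11} = 1$ and bias $\theta^{[2]}_{01} = -\ln c$, producing $\ln x$. The only genuine obstacle is the NLReLU clipping, and the $\delta$-threshold built into the domain $\mathcal{D}$ is precisely what makes the rescaling trick valid, just as the bound $M$ made the exponential construction valid.
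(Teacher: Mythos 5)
Your proposal is correct and follows essentially the same route as the paper: rescale $x$ by $1/\delta$ so that the NLReLU argument $x/\delta-1$ is nonnegative on the positive part of $\mathcal{D}$, obtain $N(x/\delta-1)=\ln(x/\delta)$, and recover $\ln x$ with a constant correction of $\ln\delta$. In fact, your version fixes a flaw in the paper's own writeup: the paper applies the correction \emph{multiplicatively}, claiming $\ln\delta\cdot N(x/\delta-1)=\ln x$ (with second-layer weight $\theta^{[2]}_{11}=\ln\delta$ and bias $0$), which is false in general since $\ln\delta\cdot\ln(x/\delta)\neq\ln(x/\delta)+\ln\delta$; your construction with weight $1$ and bias $-\ln c=\ln\delta$ is the correct one, exploiting the additivity of logarithms rather than scaling.
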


\begin{proof}
    When $\beta=1$,
    \begin{align}
        \ln\delta\cdot N(\frac{x}{\delta}-1)&=\begin{cases}
            \ln\delta\cdot\ln(\frac{x}{\delta})&\text{ if }x\ge\delta\\
            0&\text{ if }x<\delta
        \end{cases}\\
        &=\begin{cases}
            \ln(x)&\text{ if }x\ge0\\
            0&\text{ if }x<0
        \end{cases}
    \end{align}

    by definition of $\delta$. 

    Therefore, suppose $x$ is the only input dimension, $\ln(x)$ can be constructed with $\theta^{[1]}_{01}=-1,\theta^{[1]}_{11}=\frac{1}{\delta},g_1^{[1]}=N,\theta^{[2]}_{01}=0,\theta^{[2]}_{11}=\ln\delta$.
\end{proof}

\begin{thm}
    \textbf{Capability for Polynomials.} Polynomials can be constructed by a network. Here by polynomials we do not restrict ourselves to integer exponents. Formally writing, $\forall K\in\mathbb{N}_+,a_i,p_{ij}\in\mathbb{R}$, given $\mathbf{x}\in\mathcal{D}^D,\sum_{i=1}^Ka_i\prod_{j=1}^Dx_j^{p_{ij}}$ can be constructed by a network.
\end{thm}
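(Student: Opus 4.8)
The plan is to exploit the identity flagged in the proof sketch, namely that a product of powers is the exponential of a linear combination of logarithms: $\prod_{j=1}^D x_j^{p_{ij}} = \exp\!\left(\sum_{j=1}^D p_{ij}\ln x_j\right)$. Since Lemma~1 and Lemma~2 already supply networks that compute $\exp$ and $\ln$ of a single input exactly, the entire construction reduces to wiring these two building blocks together with affine maps, which an MLP realizes natively through its weights and biases.

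First I would peel off the outer structure by linearity. The expression $\sum_{i=1}^K a_i\prod_{j=1}^D x_j^{p_{ij}}$ is a weighted sum of $K$ monomials, and a weighted sum of network outputs is implemented by a single final linear layer with weights $a_i$. Hence it suffices to construct one monomial $\prod_{j=1}^D x_j^{p_{ij}}$ as an output and then combine all $K$ of them. So I fix an index $i$ and focus on a single monomial.

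I would then build that monomial in three composed stages. (i) For each coordinate $j$, apply the logarithm sub-network of Lemma~2 to $x_j$, producing $\ln x_j$ in parallel across the $D$ coordinates. (ii) Feed these into a linear layer whose weights are the exponents $p_{ij}$, yielding the single scalar $s_i = \sum_{j=1}^D p_{ij}\ln x_j$; no nonlinearity is needed here beyond the affine map. (iii) Apply the exponential sub-network of Lemma~1 to $s_i$, obtaining $\exp(s_i) = \prod_{j=1}^D x_j^{p_{ij}}$. Note that this treats polynomials and quotients uniformly: $p_{ij}>0$ and $p_{ij}<0$ are both just signed weights into the linear layer. Stacking these constructions for all $i\in\{1,\dots,K\}$ in parallel and appending the weighted-sum layer from the previous paragraph completes the network; the required depth and width are bounded constants times those of the two lemmas, which is admissible under the stated assumption of unlimited layer count and size.

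The main obstacle I expect is domain control rather than the algebra. The logarithm is real-valued only on positive arguments, so strictly speaking the construction reproduces the intended monomial only on the positive part of $\mathcal{D}$; but for a non-integer exponent $p_{ij}$ the quantity $x_j^{p_{ij}}$ is itself real only for $x_j>0$, so this is intrinsic to the statement, not a defect of the construction. Two finer points demand care. First, the exponential sub-network of Lemma~1 relies on its argument staying below $M$ (so that the ELU branch fires), whereas $s_i$ is a new intermediate quantity; I would observe that, the dataset being finite and the exponents fixed, $s_i$ is bounded, so $M$ can be chosen large enough to dominate the intermediate sums as well as the raw inputs. Second, the threshold $\delta$ in $\mathcal{D}=((-M,M)\setminus(-\delta,\delta))\cup\{0\}$ is exactly what lets Lemma~2 compute $\ln x_j$ \emph{exactly} (not approximately) whenever $x_j\ge\delta$; the genuinely delicate case is $x_j=0$, where $\ln$ is undefined and the log branch returns $0$, making the exponential stage output $1$ rather than the correct $0$. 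I would resolve this either by restricting to strictly positive inputs or by inserting a ReLU gating dimension that annihilates a monomial whenever one of its factors vanishes.
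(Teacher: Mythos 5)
Your proposal is correct and takes essentially the same route as the paper's own proof: the identity $\prod_{j=1}^D x_j^{p_{ij}}=\exp\left(\sum_{j=1}^D p_{ij}\ln x_j\right)$, the exponential and logarithmic lemmas as building blocks, and affine layers plus composition to wire them together. Your version is in fact more careful than the paper's two-line argument, which silently assumes the intermediate quantity $\sum_j p_{ij}\ln x_j$ stays within the domain where the exponential construction of Lemma~1 is valid, and never addresses the $x_j=0$ or negative-input cases --- issues you correctly identify and patch (enlarging $M$ using finiteness of the data, and gating or restricting to positive inputs).
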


\begin{proof}
    \begin{align}
        &\sum_{i=1}^Ka_i\prod_{j=1}^Dx_j^{p_{ij}}\\
        =&\sum_{i=1}^Ka_i\exp\left(\sum_{j=1}^Dp_{ij}\ln(x)\right)
    \end{align}

    Exponential and logarithmic can be constructed, as discussed previously. Linear combinations clearly can be constructed by Law of Linear Combinations in Next Layer and Law of Composition (see \nameref{app:proof}). Thus, by applying Law of Composition again, the polynomials can be constructed.
\end{proof}

In Capability for Polynomials, when the polynomial degrees are negative, they are essentially quotient expressions. Further, Law of Composition can be re-applied on it, 
% \zz{following sentences are not clear to me ANS: now?} 
% \zz{so that polynomials of not only input, but also some other values can be constructed by the network, can also be constructed by the network.}
so that polynomials of express-able expressions can also be expressed.
Therefore, all mathematical expressions involving only polynomial, quotient, exponential, and/or logarithmic components can be constructed by a network using the chosen CombU.

\section{Experimental Results and Discussion}

We compare the performance of CombU with ReLU~\cite{relu}, ELU~\cite{elu}, SELU~\cite{selu}, Swish~\cite{swish}, NLReLU~\cite{nlrelu}, GELU~\cite{gelu} in 
% \zz{what do you mean "some"? make it specific.} some experiments. ANS: now? 
experiments to fit mathematical expressions and tabular data.
In this section, we only show the core experiments and data. Supplementary information and experiments can be found in \nameref{app:sup}.

\subsection{Fitting Mathematical Formulae}

% In the previous section we have justified that using CombU as the potential of fitting mathematical expressions perfectly. Hence, in this section, the first thing we test on, is whether using CombU indeed improves the performance of MLPs in fitting mathematical formulae. 
In the previous section, we have justified that using CombU has the potential to fit mathematical expressions perfectly. Therefore, in this section, we first evaluate whether using CombU indeed enhances the performance of MLPs in fitting mathematical formulas.

We selected 
% \zz{try to avoid using some when you can give a concrete number, here I think you selected 4 formulae right? ANS: OK} some 
4 complex mathematical formulae from different expertise domains, including Gaussian distribution (GS)~\cite{stats} in statistics, modified Arrhenius equation (AR)~\cite{ar-math} in chemistry, 3D steady-state vortex solution of Naiver-Stokes equation (NS)~\cite{ns-math} in physics, and Black–Scholes formula for prices (BS)~\cite{bs-math} in finance. These formulae involve a variety of the mathematical expressions mentioned above, including typical polynomials, quotients, square roots, exponentials, logarithms, and their compositions. For each expression, a regression task is naturally defined as each involve a definite relation between the dependent variable and independent variables. By binning the dependent variable into a few classes, we also define a classification task. 

A summary of mathematical expressions used is given in Table~\ref{tab:math-summary}. Detailed data and training setup can be found in \nameref{app:exp}. 

\begin{table}[h!t]
    \centering
    \begin{tabular}{ccccccc}
        \toprule
        Abbr. & Field & HP & QT & SQ & EX & LG \\
        \midrule
        GS & Statistics & $\checkmark$ & $\checkmark$ & & $\checkmark$ \\
        AR & Chemistry & $\checkmark$ & $\checkmark$ & & $\checkmark$ \\
        NS & Physics & $\checkmark$ & $\checkmark$ & $\checkmark$ \\
        BS & Finance & $\checkmark$ & $\checkmark$ & $\checkmark$ & $\checkmark$ & $\checkmark$ \\
        \bottomrule
    \end{tabular}
    \caption{Summary of mathematical expressions tested. HP stands for higher order (more than one degree, namely, non-linear, such as $x_1^2$ and $x_1x_2$) polynomial, QT stands for quotient (in denominator), SQ stands for square root, EX stands for exponential, LG stands for logarithm, and ticks mean that the expression contains terms involving the relevant operations, including their compositions (so their input may not directly be features, but other expressions). }
    \label{tab:math-summary}
\end{table}

\begin{table*}[h!t]
    \small
    \setlength{\tabcolsep}{3pt}
    \centering
    \begin{tabular}{ccccccccc}
        \toprule
        \multicolumn{2}{c}{Expression} & ReLU & ELU & SELU & Swish & NLReLU & GELU & CombU\\
        \midrule
         & MAE & $0.088\pm0.011$ & $0.114\pm0.013$ & $0.138\pm0.006$ & $0.168\pm0.005$ & $0.078\pm0.005$ & $0.147\pm0.008$ & $\mathbf{0.073\pm0.005}$\\
        GS & MSE & $0.017\pm0.003$ & $0.036\pm0.008$ & $0.053\pm0.007$ & $0.083\pm0.006$ & $0.015\pm0.001$ & $0.062\pm0.006$ & $\mathbf{0.014\pm0.002}$\\
         & Acc. & $87.30\pm1.14$ & $87.36\pm0.32$ & $81.06\pm0.56$ & $88.52\pm0.76$ & $86.54\pm1.32$ & $\mathbf{90.82\pm0.44}$ & $89.44\pm0.92$ \\
         & F1 & $87.42\pm1.16$ & $87.57\pm0.33$ & $81.45\pm0.50$ & $88.73\pm0.74$ & $86.70\pm1.28$ & $\mathbf{90.98\pm0.42}$ & $89.58\pm0.92$ \\\midrule
         & MAE & $0.029\pm0.004$ & $0.067\pm0.013$ & $0.050\pm0.017$ & $0.045\pm0.004$ & $0.041\pm0.012$ & $0.032\pm0.004$ & $\mathbf{0.027\pm0.003}$ \\
        AR & MSE & $0.081\pm0.035$ & $0.070\pm0.026$ & $0.074\pm0.025$ & $0.097\pm0.021$ & $0.076\pm0.034$ & $\mathbf{0.063\pm0.019}$ & $\mathbf{0.064\pm0.021}$\\
         & Acc. & $88.04\pm0.91$ & $87.56\pm1.08$ & $88.54\pm0.70$ & $86.90\pm0.33$ & $\mathbf{90.38\pm0.58}$ & $87.16\pm0.56$ & $88.86\pm1.35$\\
         & F1 & $88.08\pm0.90$ & $87.62\pm1.06$ & $88.59\pm0.73$ & $87.01\pm0.32$ & $\mathbf{90.41\pm0.59}$ & $87.25\pm0.56$ & $88.91\pm1.36$\\\midrule
         & MAE & $\mathbf{0.106\pm0.004}$ & $0.162\pm0.006$ & $0.155\pm0.007$ & $0.157\pm0.006$ & $0.111\pm0.006$ & $0.131\pm0.009$ & $0.111\pm0.005$ \\
        NS & MSE & $2.420\pm0.268$ & $2.977\pm0.029$ & $2.889\pm0.028$ & $2.974\pm0.013$ & $2.525\pm0.165$ & $2.771\pm0.039$ & $\mathbf{2.414\pm0.184}$ \\
         & Acc. & $94.84\pm0.45$ & $91.46\pm0.75$ & $93.06\pm0.32$ & $91.86\pm0.22$ & $94.86\pm0.76$ & $92.86\pm0.48$ & $\mathbf{95.88\pm0.10}$ \\
         & F1 & $94.91\pm0.42$ & $91.61\pm0.75$ & $93.18\pm0.33$ & $92.00\pm0.23$ & $94.94\pm0.75$ & $92.98\pm0.49$ & $\mathbf{95.94\pm0.09}$ \\\midrule
         & MAE & $\mathbf{0.116\pm0.006}$ & $0.133\pm0.005$ & $0.136\pm0.009$ & $0.128\pm0.002$ & $0.119\pm0.006$ & $0.120\pm0.002$ & $\mathbf{0.115\pm0.006}$ \\
        BS & MSE & $0.219\pm0.017$ & $0.219\pm0.024$ & $0.220\pm0.017$ & $0.208\pm0.008$ & $0.223\pm0.019$ & $\mathbf{0.199\pm0.009}$ & $0.213\pm0.009$\\
         & Acc. & $90.78\pm0.95$ & $90.22\pm0.85$ & $91.20\pm0.42$ & $89.24\pm0.37$ & ${91.62\pm0.39}$ & $89.56\pm0.14$ & $\mathbf{91.74\pm1.22}$\\ 
         & F1 & $86.20\pm1.63$ & $85.65\pm1.08$ & $87.00\pm0.77$ & $84.25\pm0.62$ & ${87.67\pm0.66}$ & $84.73\pm0.15$ & $\mathbf{87.84\pm1.89}$ \\[3pt]
        \midrule
        & MAE & 2 & 6 & 5.75 & 5,75 & 2.75 & 4.24 & \textbf{1.25} \\
        Avg. Rank & MSE & 3.75 & 4 & 5.25 & 5.75 & 4.25 & 3 & \textbf{1.75} \\
        & Acc. & 4 & 5.25 & 4.25 & 5.75 & 2.75 & 4.5 & \textbf{1.5} \\
        & F1 & 4 & 5.25 & 4.25 & 5.75 & 2.75 & 4.5 & \textbf{1.5} \\
        \bottomrule
    \end{tabular}
    \caption{Performances of different activation functions on formulae. Mean absolute error (MAE) and mean squared error (MSE) are calculated after target being normalized by standard scaling. A classification task is also defined for each expression, whose target is the bin index. Accuracy and F1 scores are calculated for classification tasks. To summarize the performances, we also calculate the average rank in each metric, with higher rank meaning better performance.}
    \label{tab:math}
\end{table*}

\newdimen\R % maximal diagram radius (config option)
\R=3.5cm 
\newdimen\L % radius to put dimension labels (config option)
\L=4cm

\newcommand{\Dmt}{8} % number of dimensions (config option)
\newcommand{\Umt}{13} % number of scale units (config option)

\newcommand{\Amt}{360/\Dmt} % calculated angle between dimension axes  
\begin{figure}[h!t]
    \centering
\begin{tikzpicture}[scale=0.6]
  \path (0:0cm) coordinate (O); % define coordinate for origin

  % draw the spiderweb
  \foreach \X in {1,...,\Dmt}{
    \draw (\X*\Amt:0) -- (\X*\Amt:\R);
  }

  \pgfmathsetmacro{\UU}{\Umt-3}
  \foreach \Y in {0,...,\UU}{
    \pgfmathsetmacro{\Z}{\Y+3};
    \foreach \X in {1,...,\Dmt}{
      \path (\X*\Amt:\Z*\R/\Umt) coordinate (D\X-\Y);
      \fill (D\X-\Y) circle (1pt);
    }
    \draw [opacity=0.3] (0:\Z*\R/\Umt) \foreach \X in {1,...,\Dmt}{
        -- (\X*\Amt:\Z*\R/\Umt)
    } -- cycle;
  }

  % define labels for each dimension axis (names config option)
  \path (1*\Amt:\L) node (L1) {\small GS(reg)};
  \path (2*\Amt:\L) node (L2) {\small AR(reg)};
  \path (3*\Amt:\L) node (L3) {\small NS(reg)};
  \path (4*\Amt:\L) node (L4) {\small BS(reg)};
  \path (5*\Amt:\L) node (L5) {\small GS(clf)};
  \path (6*\Amt:\L) node (L6) {\small AR(clf)};
  \path (7*\Amt:\L) node (L7) {\small NS(clf)};
  \path (8*\Amt:\L) node (L8) {\small BS(clf)};

  % for each sample case draw a path around the web along concrete values
  % for the individual dimensions. Each node along the path is labeled
  % with an identifier using the following scheme:
  %
  %   D<d>-<v>, dimension <d> a number between 1 and \D (#dimensions) and
  %             value <v> a number between 0 and \U (#scale units)
  %
  % The paths will be drawn half-opaque, so that overlapping parts will be
  % rendered in a composite color.

  \draw [color=red,line width=1.5pt,opacity=1.0]
    (D1-8.993) -- (D2-7.103) --
    (D3-9.946) -- (D4-5.595) --
    (D5-6.329) -- (D6-3.211) --
    (D7-7.634) -- (D8-5.796) --
    cycle;

  \draw [color=green,line width=1.5pt,opacity=1.0]
    (D1-6.248) -- (D2-3.971) --
    (D3-0.964) -- (D4-1.548) --
    (D5-6.438) -- (D6-1.845) --
    (D7-0) -- (D8-3.910) --
    cycle;

  \draw [color=blue,line width=1.5pt,opacity=1.0]
    (D1-3.753) -- (D2-5.507) --
    (D3-1.384) -- (D4-0.625) --
    (D5-0) -- (D6-4.680) --
    (D7-3.623) -- (D8-7.750) --
    cycle;

  \draw [color=orange,line width=1.5pt,opacity=1.0]
    (D1-0) -- (D2-2.75) --
    (D3-0.446) -- (D4-5.030) --
    (D5-7.641) -- (D6-0) --
    (D7-0.903) -- (D8-0) --
    cycle;

  \draw [color=pink,line width=1.5pt,opacity=1.0]
    (D1-9.664) -- (D2-6.338) --
    (D3-8.563) -- (D4-4.048) --
    (D5-5.562) -- (D6-10) --
    (D7-7.691) -- (D8-3.166) --
    cycle;

  \draw [color=purple,line width=1.5pt,opacity=1.0]
    (D1-2.627) -- (D2-9.375) --
    (D3-4.580) -- (D4-8.810) --
    (D5-10) -- (D6-0.727) --
    (D7-3.166) -- (D8-1.309) --
    cycle;

  \draw [color=brown,line width=1.5pt,opacity=1.0]
    (D1-10) -- (D2-9.853) --
    (D3-9.554) -- (D4-7.083) --
    (D5-8.559) -- (D6-5.610) --
    (D7-10) -- (D8-10) --
    cycle;

  \begin{scope}[shift={(5,0)}]
    \draw [color=red,line width=1.5pt,opacity=1.0] (0,3.5) -- (1,3.5) node[right] {ReLU};
    \draw [color=green,line width=1.5pt,opacity=1.0] (0,2.5) -- (1,2.5) node[right] {ELU};
    \draw [color=blue,line width=1.5pt,opacity=1.0] (0,1.5) -- (1,1.5) node[right] {SELU};
    \draw [color=orange,line width=1.5pt,opacity=1.0] (0,0.5) -- (1,0.5) node[right] {Swish};
    \draw [color=pink,line width=1.5pt,opacity=1.0] (0,-0.5) -- (1,-0.5) node[right] {NLReLU};
    \draw [color=purple,line width=1.5pt,opacity=1.0] (0,-1.5) -- (1,-1.5) node[right] {GELU};
    \draw [color=brown,line width=1.5pt,opacity=1.0] (0,-2.5) -- (1,-2.5) node[right] {CombU};
  \end{scope}

\end{tikzpicture}

    \caption{Radar graph of performances of each different activation functions on different mathematical expressions. ``clf'' stands for the classification task, and ``reg'' stands for the regression task. The values are normalized based on the range of the metric on the dataset over different activation functions and then rounded to 0.1. Value 0 is not at the center, but at some distance out of the center, for better visibility. For each experiment, 
    % \zz{what do you mean "the average of two metrics"? How many times the experiments are repeated? it should also be mentioned in this caption. ANS: after normalization, as described in the previous sentence} the average of two metrics after the normalization is drawn as data points in the graph.
    the average of the mean scores after normalization in the experiments is drawn as data points in the graph.
    }
    \label{fig:radar-math}
\end{figure}
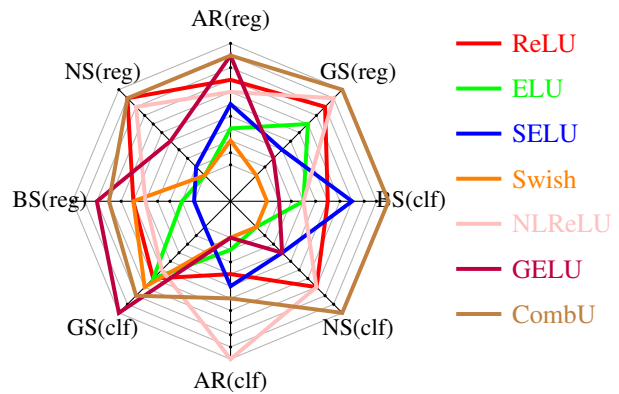

Experimental results are summarized in Table~\ref{tab:math}, and a more visual-friendly radar graph is drawn in Figure~\ref{fig:radar-math}. Although CombU is not always the best for all metrics in all experiments, the result shows a clear stable advantage of CombU over other activation functions. CombU performs the best on more than half of the experiments. On those experiments where CombU does not perform the best, it is also the second or third among the seven tested activation functions, and there is no steady winner that is always the best among the rest activation functions.

\subsection{Fitting Real-world Datasets}

The advantage of CombU also extends to real-world datasets, which is unsurprising since real-world tabular datasets are likely to embed some implicit mathematical relations. In this section, we show the results of a series of tabular classification and regression tasks on common benchmark datasets, using MLPs of different activation functions. We selected 4 classification and 4 regression datasets:

\begin{itemize}
    \item \textbf{Classification}: \begin{itemize}
        \item Breast Cancer Wisconsin (Original) Data Set (BW)~\cite{breast-w-data}
        \item Diabetes (DI)~\cite{diabetes-data}
        \item Breast Cancer Wisconsin (Diagnostic) Data Set (WDBC) (WD)~\cite{wdbc-data}
        \item Phishing websites (PW)~\cite{pweb-data}
    \end{itemize}
    \item \textbf{Regression}: \begin{itemize}
        \item AutoMPG (AM)~\cite{autompg-data}
        \item Boston housing (BS)~\cite{boston-data}
        \item California housing (CH)~\cite{cahouse-data}
        \item Wine quality (WQ)~\cite{wineq-data}
    \end{itemize}
\end{itemize}

More details about the datasets are shown in Table~\ref{tab:datasets}. Each dataset can be obtained from library \verb|sklearn.datasets.fetch_openml(HANDLE)|.

\begin{table*}[h!t]
    \centering
    \begin{tabular}{cccccc}
        \toprule
        Abbreviation & Handle & \#R & \#N & \#C & Task \\
        \midrule
        % AD~\cite{adult-data} & adult & 48842 & 2 & 12 & clf(2)\\
        CH~\cite{cahouse-data} & california\_housing & 20640 & 8 & 1 & reg\\
        DI~\cite{diabetes-data} & diabetes & 768 & 8 & 0 & clf(2)\\
        BS~\cite{boston-data} & boston & 506 & 13 & 2 & reg \\
        BW~\cite{breast-w-data} & breast-w & 699 & 9 & 0 & clf(2) \\
        WQ~\cite{wineq-data} & wine\_quality & 6497 & 11 & 0 & reg\\
        WD~\cite{wdbc-data} & wdbc & 569 & 30 & 0 & clf(2) \\
        PW~\cite{pweb-data} & PhishingWebsites & 11055 & 0 & 30 & clf(2)\\
        AM~\cite{autompg-data} & autoMpg & 398 & 4 & 3 & reg\\
        \bottomrule
    \end{tabular}
    \caption{Summary of all real-world datasets used. Handle refers to the OpenML handle, as all datasets are fetched from OpenML using $\mathtt{scikit-learn}$. \#R refers to the number of rows, \#N refers the number of numerical features, \#C refers to the number of classification features. In the Task column, ``reg'' stands for regression, ``clf'' stands for classification, with a subsequent bracket indicating the number of different classes.}
    \label{tab:datasets}
\end{table*}

For all datasets' features, numerical values are normalized by standard scaling, and categorical values are one-hot encoded.%, \zz{what do yo mean by first category dropped? actually this type of implementation detail can be omitted. ANS: ok I'll hide this}with first category dropped to avoid redundant information, before training. 
% If N/A is present in a categorical column, it is treated as a new category. If N/A is present in a numerical column, we first extract an Boolean indicator of whether it is N/A, and then fill N/As with the mean value. 
Regression tasks' targets are also standard scaled, based on which the reported MSE and MAE scores are calculated. All datasets are randomly split into train and test sets by $4:1$ ratio. %\zz{how many times do the experiments repeat?}
For all experiments, we take 5 runs of the same experiment and collect the scores.

\begin{table*}[h!t]
    \small
    \setlength{\tabcolsep}{3pt}
    \centering
    \begin{tabular}{ccccccccc}
        \toprule
        \multicolumn{2}{c}{Dataset} & ReLU & ELU & SELU & Swish & NLReLU & GELU & CombU\\
        \midrule
        % AD & Acc. & $83.07\pm0.25$ & $85.29\pm0.24$ & $84.26\pm0.70$ & $\mathbf{85.38\pm0.33}$ & $82.90\pm0.12$ & $84.38\pm0.33$ & $83.10\pm0.37$\\
        % \cite{adult-data}& F1 & $76.59\pm0.27$ & $78.65\pm0.51$ & $\mathbf{79.36\pm0.52}$ & $79.01\pm0.52$ & $76.24\pm0.43$ & $78.59\pm0.55$ & $76.50\pm0.57$\\\hline
        BW & Acc. & $95.14\pm0.29$ & $\mathbf{97.14\pm0.00}$ & $97.00\pm0.29$ & $95.71\pm0.00$ & $95.29\pm0.03$ & $95.43\pm0.35$ &$95.43\pm0.03$\\
        & F1 & $94.52\pm0.31$ & $\mathbf{96.80\pm0.00}$ & $96.64\pm0.31$ & $95.16\pm0.03$ & $94.70\pm0.41$ & $95.83\pm0.38$ & $94.86\pm0.41$\\\midrule
        DI& Acc. & $74.29\pm2.04$ & $74.42\pm0.66$ & $74.15\pm1.26$ & $75.19\pm0.07$ & $74.04\pm1.40$ & $\mathbf{75.71\pm0.78}$ & $\mathbf{75.71\pm3.05}$ \\
         & F1 & $70.18\pm2.18$ & $69.11\pm0.73$ & $69.47\pm1.66$ & $70.19\pm1.20$ & $71.24\pm1.97$ & $71.46\pm1.12$ & $\mathbf{71.80\pm3.05}$ \\\midrule
        WD & Acc. & $95.96\pm0.43$ & $96.14\pm0.43$ & $95.95\pm0.89$ & $\mathbf{97.37\pm0.00}$ & $96.49\pm0.55$ & $95.96\pm0.43$ & $96.67\pm0.35$\\
         & F1 & $95.54\pm0.47$ & $95.77\pm0.46$ & $95.59\pm0.95$ & $\mathbf{97.07\pm0.00}$ & $96.12\pm0.61$ & $95.59\pm0.46$ & $96.33\pm0.38$\\\midrule
        PW & Acc. & $97.58\pm0.11$ & $97.27\pm0.21$ & $97.13\pm0.22$ & $96.37\pm0.27$ & $\mathbf{97.60\pm0.17}$ & $97.13\pm0.22$ & $97.46\pm0.07$\\
        & F1 & $97.54\pm0.11$ & $97.22\pm0.21$ & $97.09\pm0.22$ & $96.31\pm0.27$ & $\mathbf{97.56\pm0.17}$ & $96.87\pm0.33$ & $97.41\pm0.07$\\
        \midrule
        % & Acc. & 90.74 & 91.24 & 91.06 & 91.16 & 90.86 & 91.01 & \textbf{91.32} \\
        % Average & F1 & 89.45 & 89.73 & 89.70 & 89.69 & 89.91 & 89.94 & \textbf{90.10} \\
        % & Acc. & 28.34 & 52.17 & 40.34 & 49.34 & 36.27 & 39.80 & \textbf{63.37} \\
        % Average & F1 & 34.65 & 46.91 & 43.00 & 42.05 & 56.12 & 48.21 & \textbf{63.47}\\
        Avg. Rank & Acc. & 4.75 & 3.25 & 4.5 & 4 & 3.5 & 4.25 & \textbf{2.5} \\
        & F1  & 5.25 & 4 & 4.5 & 4 & 4 & 3.25 & \textbf{2.75} \\
        \bottomrule
    \end{tabular}
    \caption{Performances of different activation functions on classification datasets. Metrics collected are accuracy and F1 scores in \%. The average accuracy and F1 scores are calculated after scores are normalized for different datasets. 
    % We scale the scores for each dataset's different metrics for all different activation functions in the range $[0, 1]$ and then take the average.
    }
    \label{tab:clf}
\end{table*}

\begin{table*}[h!t]
    \small
    \setlength{\tabcolsep}{3pt}
    \centering
    \begin{tabular}{ccccccccc}
        \toprule
        \multicolumn{2}{c}{Dataset} & ReLU & ELU & SELU & Swish & NLReLU & GELU & CombU\\
        \midrule
        AM & MAE & $0.222\pm0.006$ & $0.242\pm0.002$ & $0.248\pm0.006$ & $0.229\pm0.004$ & $\mathbf{0.219\pm0.004}$ & $0.222\pm0.003$ & $0.223\pm0.006$ \\
        & MSE & $0.090\pm0.004$ & $0.095\pm0.004$ & $0.099\pm0.004$ & $0.086\pm0.004$ & $0.089\pm0.004$ & $\mathbf{0.084\pm0.005}$ & $0.091\pm0.006$ \\\midrule
        BS & MAE & $0.232\pm0.007$ & $0.268\pm0.002$ & $0.263\pm0.006$ & $0.258\pm0.004$ & $0.235\pm0.010$ & $0.253\pm0.001$ & $\mathbf{0.228\pm0.003}$\\
        & MSE & $0.093\pm0.005$ & $0.123\pm0.002$ & $0.122\pm0.004$ & $0.114\pm0.003$ & $0.095\pm0.006$ & $0.109\pm0.003$ & $\mathbf{0.090\pm0.003}$\\\midrule
        CH & MAE & $0.289\pm0.005$ & $0.298\pm0.008$ & $0.298\pm0.005$ & $0.300\pm0.005$ & $0.290\pm0.004$ & $0.295\pm0.007$ & $\mathbf{0.283\pm0.004}$\\
        & MSE & $0.204\pm0.006$ & $0.200\pm0.003$ & $0.199\pm0.003$ & $0.204\pm0.003$ & $0.201\pm0.002$ & $0.199\pm0.004$ & $\mathbf{0.192\pm0.003}$\\\midrule
        WQ & MAE & $0.532\pm0.004$ & $0.605\pm0.002$ & $0.617\pm0.006$ & $0.616\pm0.004$ & $\mathbf{0.528\pm0.004}$ & $0.610\pm0.005$ & $0.544\pm0.006$\\\
         & MSE & $0.583\pm0.012$ & $0.597\pm0.004$ & $0.599\pm0.008$ & $0.696\pm0.004$ & $\mathbf{0.567\pm0.006}$ & $0.612\pm0.009$ & $0.573\pm0.011$\\
        \midrule  
        % & MAE & 0.150 & 0.885& 0.939 & 0.771 & 0.147 & 0.589 & \textbf{0.079} \\
        % Average & MSE & 0.462 & 0.767 & 0.816 & 0.682 & 0.309 & 0.540 & \textbf{0.150} \\
        Avg. Rank & MAE & \textbf{2} & 5.5 & 6.25 & 5.75 & \textbf{2} & 3.75 & 2.25 \\
         & MSE & 3.75 & 5.25 & 5 & 4.75 & 3 & 3.5 & \textbf{2.25} \\
        \bottomrule
    \end{tabular}
    \caption{Performances of different activation functions on regression datasets. Metrics collected are mean absolute error (MAE) and mean squared error (MSE), after target being normalized by standard scaling. 
    % We scale the scores for each dataset's different metrics for all different activation functions in the range $[0,1]$ and then take the average.
    }
    \label{tab:reg}
\end{table*}

% Classification results can be found in Table~\ref{tab:clf}, and regression results can be found in Table~\ref{tab:reg}. A consolidated view of performances on all different datasets can be found in Figure~\ref{fig:radar}.

% Each dataset has its own statistics and properties, and which activation function performs best cannot be uniformly concluded. Hence, instead, we compare the overall performances of a series of different experiments. Calculating the mean performance for each metric, or the mean rank in each metric, CombU shows great advantage in both classification and regression tasks, especially on averaged normalized metric results. Moreover, CombU is the only activation function that averagely maintains  top 3 scores on all different metrics, which shows that CombU relatively steadily gives a better performance. 
Classification results can be found in Table~\ref{tab:clf}, and regression results can be found in Table~\ref{tab:reg}. A consolidated view of performances on all different datasets can be found in Figure~\ref{fig:radar}.
% \zz{
Given that each dataset has its own unique statistics and properties, it is difficult to determine a universally optimal activation function. Therefore, we compared the overall performance across a series of experiments. By calculating the mean rank within each metric, CombU demonstrated a significant advantage in both classification and regression tasks.%, particularly in averaged normalized metric results. 
Additionally, CombU is the only activation function that consistently ranks within the top three across all metrics, indicating its relatively stable and superior performance.
% } ANS: edited: "[By calculating] mean performance for each metric" removed because it's not shown, and not really making much sense without normalization, and removed relevant result (after normalization, as values are not shown, but should I show?)

\newcommand{\Dds}{16} % number of dimensions (config option)
\newcommand{\Uds}{13} % number of scale units (config option)

\newcommand{\Ads}{360/\Dds} % calculated angle between dimension axes  
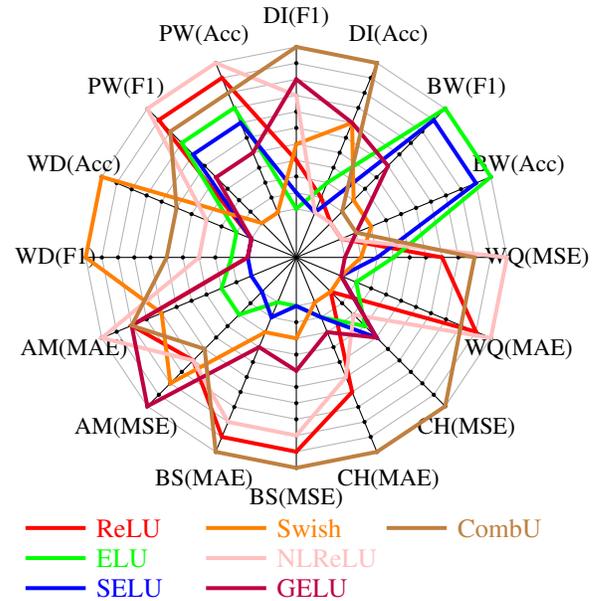
\begin{figure}
    \centering
\begin{tikzpicture}[scale=0.8]
  \path (0:0cm) coordinate (O); % define coordinate for origin

  % draw the spiderweb
  \foreach \X in {1,...,\Dds}{
    \draw (\X*\Ads:0) -- (\X*\Ads:\R);
  }

  \pgfmathsetmacro{\UdsU}{\Uds-3}
  \foreach \Y in {0,...,\UdsU}{
    \pgfmathsetmacro{\Z}{\Y+3};
    \foreach \X in {1,...,\Dds}{
      \path (\X*\Ads:\Z*\R/\Uds) coordinate (D\X-\Y);
      \fill (D\X-\Y) circle (1pt);
    }
    \draw [opacity=0.3] (0:\Z*\R/\Uds) \foreach \X in {1,...,\Dds}{
        -- (\X*\Ads:\Z*\R/\Uds)
    } -- cycle;
  }

  % define labels for each dimension axis (names config option)
  \path (1*\Ads:\L) node (L1) {\small BW(Acc)};
  \path (2*\Ads:\L) node (L2) {\small BW(F1)};
  \path (3*\Ads:\L) node (L3) {\small DI(Acc)};
  \path (4*\Ads:\L) node (L4) {\small DI(F1)};
  \path (5*\Ads:\L) node (L5) {\small PW(Acc)};
  \path (6*\Ads:\L) node (L6) {\small PW(F1)};
  \path (7*\Ads:\L) node (L7) {\small WD(Acc)};
  \path (8*\Ads:\L) node (L8) {\small WD(F1)};
  \path (9*\Ads:\L) node (L9) {\small AM(MAE)};
  \path (10*\Ads:\L) node (L10) {\small AM(MSE)};
  \path (11*\Ads:\L) node (L11) {\small BS(MAE)};
  \path (12*\Ads:\L) node (L12) {\small BS(MSE)};
  \path (13*\Ads:\L) node (L13) {\small CH(MAE)};
  \path (14*\Ads:\L) node (L14) {\small CH(MSE)};
  \path (15*\Ads:\L) node (L15) {\small WQ(MAE)};
  \path (16*\Ads:\L) node (L16) {\small WQ(MSE)};

  % for each sample case draw a path around the web along concrete values
  % for the individual dimensions. Each node along the path is labeled
  % with an identifier using the following scheme:
  %
  %   D<d>-<v>, dimension <d> a number between 1 and \Dds (#dimensions) and
  %             value <v> a number between 0 and \Uds (#scale units)
  %
  % The paths will be drawn half-opaque, so that overlapping parts will be
  % rendered in a composite color.

  \draw [color=red,line width=1.5pt,opacity=1.0]
    (D1-0) -- (D2-0) --
    (D3-1.497) -- (D4-3.978) --
    (D5-9.837) -- (D6-9.84) --
    (D7-0) -- (D8-0) --
    (D9-8.966) -- (D10-6) --
    (D11-9) -- (D12-9.091) --
    (D13-6.471) -- (D14-0) --
    (D15-9.551) -- (D16-6.444) --
    cycle;

  \draw [color=green,line width=1.5pt,opacity=1.0]
    (D1-10) -- (D2-10) --
    (D3-2.275) -- (D4-0) --
    (D5-7.317) -- (D6-7.28) --
    (D7-1.277) -- (D8-1.484) --
    (D9-2.069) -- (D10-2.667) --
    (D11-0) -- (D12-0) --
    (D13-1.176) -- (D14-3.333) --
    (D15-1.348) -- (D16-3.333) --
    cycle;

  \draw [color=blue,line width=1.5pt,opacity=1.0]
    (D1-9.3) -- (D2-9.298) --
    (D3-0.659) -- (D4-1.338) --
    (D5-6.179) -- (D6-6.24) --
    (D7-0) -- (D8-0.323) --
    (D9-0) -- (D10-0) --
    (D11-1.25) -- (D12-0.303) --
    (D13-1.176) -- (D14-4.167) --
    (D15-0) -- (D16-2.889) --
    cycle;

  \draw [color=orange,line width=1.5pt,opacity=1.0]
    (D1-2.85) -- (D2-2.807) --
    (D3-6.887) -- (D4-4.015) --
    (D5-0) -- (D6-0) --
    (D7-10) -- (D8-10) --
    (D9-6.552) -- (D10-8.667) --
    (D11-2.5) -- (D12-2.727) --
    (D13-0) -- (D14-0) --
    (D15-0.112) -- (D16-1.333) --
    cycle;

  \draw [color=pink,line width=1.5pt,opacity=1.0]
    (D1-0.75) -- (D2-0.789) --
    (D3-0) -- (D4-7.918) --
    (D5-10) -- (D6-10) --
    (D7-3.759) -- (D8-3.742) --
    (D9-10) -- (D10-6.667) --
    (D11-8.25) -- (D12-8.485) --
    (D13-5.882) -- (D14-2.5) --
    (D15-10) -- (D16-10) --
    cycle;

  \draw [color=purple,line width=1.5pt,opacity=1.0]
    (D1-1.45) -- (D2-5.746) --
    (D3-6.886) -- (D4-8.736) --
    (D5-4.472) -- (D6-4.48) --
    (D7-0) -- (D8-0.324) --
    (D9-8.966) -- (D10-10) --
    (D11-3.75) -- (D12-4.242) --
    (D13-2.941) -- (D14-4.167) --
    (D15-0.787) -- (D16-0) --
    cycle;

  \draw [color=brown,line width=1.5pt,opacity=1.0]
    (D1-1.45) -- (D2-1.491) --
    (D3-10) -- (D4-10) --
    (D5-8.862) -- (D6-8.8) --
    (D7-5.035) -- (D8-5.097) --
    (D9-8.621) -- (D10-5.333) --
    (D11-10) -- (D12-10) --
    (D13-10) -- (D14-10) --
    (D15-8.202) -- (D16-8.667) --
    cycle;

  \begin{scope}[shift={(-4.5,-6)}]
    \draw [color=red,line width=1.5pt,opacity=1.0] (0,1.5) -- (1,1.5) node[right] {ReLU};
    \draw [color=green,line width=1.5pt,opacity=1.0] (0,1) -- (1,1) node[right] {ELU};
    \draw [color=blue,line width=1.5pt,opacity=1.0] (0,0.5) -- (1,0.5) node[right] {SELU};
    \draw [color=orange,line width=1.5pt,opacity=1.0] (3,1.5) -- (4,1.5) node[right] {Swish};
    \draw [color=pink,line width=1.5pt,opacity=1.0] (3,1) -- (4,1) node[right] {NLReLU};
    \draw [color=purple,line width=1.5pt,opacity=1.0] (3,0.5) -- (4,0.5) node[right] {GELU};
    \draw [color=brown,line width=1.5pt,opacity=1.0] (6,1.5) -- (7,1.5) node[right] {CombU};
  \end{scope}

\end{tikzpicture}

    \caption{Radar graph of performances of each different activation functions on different datasets. The values are normalized based on the range of the metric on the dataset over different activation functions and then rounded to 0.1. Value 0 is not at the center, but at some distance out of the center, for better visibility.}
    \label{fig:radar}
\end{figure}

\subsection{Generative Tasks on Tabular Data}

\begin{table*}[h!t]
    \small
    \setlength{\tabcolsep}{2pt}
    \centering
    \begin{tabular}{cccccccccc}
        \toprule
        \multicolumn{2}{c}{Dataset} & Real & ReLU & ELU & SELU & Swish & NLReLU & GELU & CombU\\
        \midrule
        & CTGAN MAE & 0.267 & $0.521\pm0.008$ & $0.676\pm0.009$ & $0.684\pm0.005$ & $0.725\pm0.018$ & $0.587\pm0.025$ & $0.650\pm0.012$ & $\mathbf{0.513\pm0.006}$ \\
        CH & CTGAN MSE & 0.162 & $0.504\pm0.005$ & $0.771\pm0.007$ & $0.762\pm0.021$ & $0.881\pm0.046$ & $0.613\pm0.053$ & $0.799\pm0.020$ & $\mathbf{0.471\pm0.009}$ \\
         & TVAE MAE & 0.267 & $0.471\pm0.006$ & $0.485\pm0.007$ & $0.496\pm0.009$ & $0.509\pm0.003$ & $0.491\pm0.001$ & $0.478\pm0.003$ & $\mathbf{0.460\pm0.006}$ \\
        & TVAE MSE & 0.162 & $0.425\pm0.014$ & $0.439\pm0.013$ & $0.437\pm0.020$ & $0.528\pm0.006$ & $0.454\pm0.007$ & $0.463\pm0.005$ & $\mathbf{0.399\pm0.006}$\\
        \midrule
        & CTGAN Acc. & 97.11 & $83.21\pm0.76$ & $79.65\pm0.46$ & $84.38\pm0.49$ & $83.24\pm0.65$ & $76.38\pm1.91$ & $84.37\pm0.56$ & $\mathbf{84.47\pm0.22}$\\
        PW & CTGAN F1 & 97.07 & $83.17\pm0.75$ & $79.59\pm0.45$ & ${84.21\pm0.48}$ & $82.93\pm0.65$ & $76.29\pm1.94$ & $83.98\pm0.55$ & $\mathbf{84.24\pm0.18}$\\
        & TVAE Acc. & 97.11 & $\mathbf{89.46\pm0.17}$ & $88.84\pm0.43$ & $88.69\pm0.35$ & $88.92\pm0.32$ & $87.43\pm0.54$ & $89.27\pm0.65$ & ${89.40\pm0.13}$\\
        & TVAE F1 & 97.07 & ${89.23\pm0.18}$ & $88.72\pm0.44$ & $88.52\pm0.36$ & $88.74\pm0.35$ & $87.40\pm0.54$ & $89.17\pm0.67$ & $\mathbf{89.28\pm0.13}$ \\
        \bottomrule
    \end{tabular}
    \caption{Performances of different activation functions on tabular generation tasks. Metrics collected are machine learning efficacy, with accuracy (Acc., in \%) and F1 (in \%) score for classification and mean absolute error (MAE) and mean squared error (MSE) for regression after target being normalized by standard scaling. Column ``Real'' means the performance of the real data as training set. 
    % \zz{what is the rule to highlight the result? for example for the last row, why $89.23\pm 0.18$ has been highlighted?} ANS: potentially highest scores (two scores close enough such that which is better could be affected by random error). this applies to all tables in the paper. how should I put it?
    }
    \label{tab:gen}
\end{table*}

By the same intuition of implicit mathematical expressions in real-world datasets, we try different tasks on the same datasets. One suitable task would be generative task. In this section, we try different activation functions at all layers for two major generative models with MLP backbone: CTGAN~\cite{ctgan} for GAN~\cite{gan}, and TVAE~\cite{ctgan} for VAE~\cite{vae}, which are also two of the most well-known tabular generative models. We will run experiments on real-world datasets with more than 10,000 rows only because generative tasks typically are more demanding on the data. Evaluation is done by machine learning efficacy, calculated by training a machine learning model on the synthetic data as the training set and testing on a real test set not seen by the generative model. The result is shown in Table~\ref{tab:gen}. Each generative model is trained to generate three sets of samples with the same size of the real training set to obtain the scores. CombU shows a significant advantage over the rest of activation functions for generative tasks.

\section{Discussion}

% No activation function shows universal advantage over all other activation functions tested in all datasets and metrics. Therefore, we can only compare different activation functions in terms of their general performance and stability of having good performances. The experiments verified that CombU does have a better average performance and stability of having upper-half performances than other activation functions, especially when fitting mathematical expressions.

No single activation function consistently outperforms all others across all datasets and metrics. Consequently, we must compare activation functions based on their general performance and the stability of achieving strong results. Our experiments confirmed that CombU exhibits superior average performance and greater stability in achieving upper-half rankings compared to other activation functions, particularly in fitting mathematical expressions.

% Despite the theoretical capability of CombU in perfectly fitting the mathematical expressions (except for Black-Scholes formula, as it involves the cumulative density function of standard normal distribution, which has no closed mathematical form), no experiment on mathematical expression shows perfect performance of CombU. Possible reasons include the following,
Although CombU theoretically has the capability to perfectly fit mathematical expressions (with the exception of the Black-Scholes formula, which involves the cumulative density function of the standard normal distribution so it lacks a closed-form expression), none of our experiments demonstrated perfect performance by CombU on mathematical expressions. Possible reasons for this include the following:

\begin{itemize}
    \item Training of MLPs are done by gradient descent or its variants, which is never guaranteed to be a perfect solver, but an approximation of the target. Therefore, although CombU activation allows the MLPs to perfectly fit the target, they are very likely to be trapped in local optima.
    \item The mathematical proof of the capability of exponential and logarithm, and hence polynomial (which relies on the proof of the former), depends on the constraint that the training dataset is finite and thus has a fixed range in terms of the maximum and minimum non-zero absolute values. There is chance that the test set feature values, or intermediate values, are out of the range of training set. Also, the dependence on the empirical range, especially the minimum non-zero absolute values, may make the model less numerically stable.
    \item For some expressions, the MLP size may not be sufficient to perfectly express the formulae. 
\end{itemize}

The advantage of CombU in typical supervised tasks is less obvious and stable than mathematical expressions. This is likely because of the confounders, which make the target variable not purely expressed by mathematical expressions but with an error term. Confounders make it nearly impossible to represent the target variable with a perfect mathematical expression, thereby limiting CombU's advantage. Additionally, confounders increase the risk of overfitting, as the network may attempt to fit the error terms—terms that cannot theoretically be inferred from the features.
% Confounders not only make a perfect mathematical expression almost impossible so that CombU's advantage becomes somewhat limited, but also give a higher potential of over-fitting, as when the network tries to find a perfect description of the target, it fits the error terms too, while the error terms cannot be inferred from the features in theory.

The advantage of CombU is more obvious in generative tasks than supervised tasks. This is likely because of the simpler but pervasive implicit mathematical relations that describe the interaction between different features. 

In addition, the experiments' standard deviations show CombU's advantage in producing stable results, especially in generative tasks.

\section{Conclusion \& Future Works}

\subsection{Conclusion}

In this paper, we propose a new activation function -- CombU, which effectively combines activations with linear, exponential, and logarithmic components. We have theoretically proven that by integrating different activation functions within the same layer across multiple layers, neural networks can achieve near-perfect fitting of most mathematical expressions. Our experiments further demonstrate that CombU outperforms conventional activation functions, which are applied uniformly across all layers and dimensions, on prediction tasks of mathematical formulae, classification and regression datasets, and on generation tasks for tabular data.

% By combining different activation functions in the same layer across different layers, it can be confirmed theoretically that most mathematical expressions can be perfectly fitted. We call this activation method CombU. Experiments also show CombU has a better capability than other common activation functions used uniformly in all layers across all dimensions in fitting mathematical expressions and data relations that are supposedly consists of some mathematical relations.

\subsection{Future Work}

So far, CombU is implemented by a simple \textit{for} loop that applies different activation functions sequentially within the same activation layer. However, we anticipate that applying these activation functions in parallel could accelerate model training and inference, allowing us to fully leverage the available computing resources.

% However, we expect that simultaneously applying these activation functions in parallel can accelerate the model training and inference by leveraging the powerful computing machines.
%, as advantages of using powerful computing machines can be maximized.

Other mathematical expressions, such as those involving trigonometric functions, may also be represented perfectly by a CombU-activated neural network if the neural network is complex-valued.
This is supported by Euler's formula $e^{ix}=\cos x+i\sin x$, which establishes a relationship between exponential and trigonometric functions.
%, by Euler's formula ($e^{ix}=\cos x+i\sin x$, which gives a relation between exponential and trigonometric functions).

\bibliography{ref}
\bibliographystyle{aaai}

\appendix
\section{Appendix A: Proof of Basic Rules}
\label{app:proof}

Notations and constraints in this appendix would stay aligned with Section~\nameref{sec:theo:proof}.

\begin{lemma}
    \label{thm:lc-next}
    \textbf{Law of Linear Combinations in Next Layer.} Any linear combination (with bias) of the output (after activation) of a previous layer can be the input of the next layer (before activation). Formally writing, $\forall l\in\{0,1,\dots,L-1\},j\in\{1,2,\dots,D_{l+1}\},x_j^{[l+1]}=(\boldsymbol{\theta}^{[l+1]}_{\cdot j})^T\mathbf{z}^{[l]}+\theta^{[l+1]}_{0j}$, which is a linear combination of $\mathbf{z}^{[l]}$.
\end{lemma}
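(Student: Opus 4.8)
The plan is to observe that this lemma is a direct formalization of the forward-pass definition of an MLP, so the proof is essentially a matter of unwinding notation rather than genuine mathematical work. First I would recall that, by the very construction of a fully-connected layer, the pre-activation value at each node of layer $l+1$ is \emph{defined} to be an affine function of the post-activation outputs $\mathbf{z}^{[l]}$ of the previous layer. Concretely, for each $j\in\{1,\dots,D_{l+1}\}$ one writes
\begin{equation}
    x_j^{[l+1]}=\sum_{i=1}^{D_l}\theta^{[l+1]}_{ij}z_i^{[l]}+\theta^{[l+1]}_{0j}=(\boldsymbol{\theta}^{[l+1]}_{\cdot j})^T\mathbf{z}^{[l]}+\theta^{[l+1]}_{0j},
\end{equation}
which is exactly the claimed expression, with the index $i=0$ carrying the bias as declared in the notation.

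The second step is to upgrade this identity into the existence statement implicit in the phrase ``can be''. Because the weight entries $\theta^{[l+1]}_{ij}$ and the bias $\theta^{[l+1]}_{0j}$ are unconstrained trainable real parameters, given any target coefficients $c_1,\dots,c_{D_l}\in\mathbb{R}$ and any target bias $c_0\in\mathbb{R}$ I would simply set $\theta^{[l+1]}_{ij}=c_i$ and $\theta^{[l+1]}_{0j}=c_0$. Then $x_j^{[l+1]}=\sum_{i=1}^{D_l}c_i z_i^{[l]}+c_0$, so every affine combination of $\mathbf{z}^{[l]}$ is realizable as a pre-activation input. Since $j$ and $l$ were arbitrary, the universal quantifiers over $l\in\{0,\dots,L-1\}$ and $j\in\{1,\dots,D_{l+1}\}$ are all covered at once.

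I do not expect a genuine obstacle here, since the result holds by definition of the architecture; the only point requiring care is notational consistency — confirming that $\mathbf{z}^{[l]}$ denotes the post-activation output (so the combination is genuinely of the \emph{activated} values, as the lemma requires) and that the $i=0$ convention for the bias aligns with the summation above. The lemma's real role is infrastructural: together with the Law of Composition it licenses the constructive proofs of the exponential, logarithmic, and polynomial results to freely form linear combinations across layers, which is precisely why it is worth isolating despite its elementary character.
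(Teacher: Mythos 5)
Your proposal is correct and matches the paper's treatment: the paper offers no written proof, stating only that the lemma's ``correctness is obvious by design of MLP neural networks,'' which is exactly the definitional unwinding you carry out. Writing out the affine forward-pass identity and noting that the free parameters realize any coefficients is simply the explicit form of that same observation.
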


Lemma~\ref{thm:lc-next}'s correctness is obvious by design of MLP neural networks.

\begin{thm}
    \textbf{Law of Layer Persistence.} For any value as the output of a certain layer (after activation), it can be constructed as the input of any subsequent layer. Formally writing, $\forall l'>l,i\in\{1,2,\dots,D_l\},j\in\{1,2,\dots,D_{l'}\},\exists\boldsymbol{\Theta}$ such that $z^{[l]}_i=x^{[l']}_j$.
\end{thm}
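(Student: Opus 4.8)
The plan is to show that a value available after activation at layer $l$ can be propagated unchanged to become a pre-activation input at any later layer $l'$, using only the ReLU dimensions that CombU guarantees to be present. First I would dispose of the base case $l'=l+1$: by the Law of Linear Combinations in Next Layer (Lemma~\ref{thm:lc-next}), setting $\theta^{[l+1]}_{ij}=1$ and every other weight and bias feeding $x^{[l+1]}_j$ to zero yields $x^{[l+1]}_j=z^{[l]}_i$ directly, with no activation interposed.

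For the general case $l'\ge l+2$ the obstacle is that the value must survive the activation applied at each intermediate layer, and a single ReLU neuron would clip its negative part. I would resolve this with two elementary facts about ReLU: the signed decomposition $v=R(v)-R(-v)$, valid for every real $v$, and the idempotence $R(R(w))=R(w)$ for $w\ge 0$ (noting that $R(\cdot)$ is always non-negative). Concretely, at layer $l+1$ I would route $z^{[l]}_i$ into two dimensions that the CombU mask has assigned the ReLU activation, one with weight $+1$ and one with weight $-1$, producing the pair $\big(R(z^{[l]}_i),\,R(-z^{[l]}_i)\big)$ after activation. Both entries are non-negative, so at each subsequent layer $l+2,\dots,l'-1$ I would carry each entry forward into a further ReLU dimension with weight $1$; idempotence then reproduces the pair unchanged, for arbitrarily many intermediate layers. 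Finally, at layer $l'$, Lemma~\ref{thm:lc-next} lets me form the linear combination $x^{[l']}_j=R(z^{[l]}_i)-R(-z^{[l]}_i)=z^{[l]}_i$.

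It remains to verify that the gadget can actually be placed and does not interfere with the rest of the layer. Since CombU allocates a $0.5$ fraction of every layer to ReLU and hidden widths are assumed arbitrarily large, at least two ReLU dimensions are available at each of the layers $l+1,\dots,l'-1$, which is all the construction consumes. Every dimension not used by the gadget can have its incoming weights and bias set to zero; because $R(0)=E(0)=N(0)=0$, such dimensions output $0$ and never contaminate the downstream linear combinations. I expect no genuine difficulty beyond this bookkeeping: the one conceptual point is the sign-preserving ReLU pair, and the item to state most carefully is the availability of ReLU dimensions under the fixed CombU mask, which follows immediately from the $0.5$ proportion together with the freedom to enlarge each layer.
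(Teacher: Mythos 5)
Your proposal is correct and rests on essentially the same idea as the paper's proof: both use the signed decomposition $z = R(z) - R(-z)$, realized by routing the value into two ReLU dimensions per intermediate layer and subtracting at layer $l'$ via the Law of Linear Combinations in Next Layer. The only difference is bookkeeping: the paper phrases the transport as an induction that recombines the pair into $\pm z^{[l]}_i$ at every intermediate pre-activation and re-splits it, whereas you split once, carry the non-negative pair $\bigl(R(z^{[l]}_i), R(-z^{[l]}_i)\bigr)$ forward unchanged using ReLU idempotence on non-negative inputs, and recombine only at the end --- a marginally leaner presentation of the same gadget.
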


\begin{proof}
    If $l'=l+1$, there is no activation between $z^{[l]}_i$ and $x^{[l']}_j$, and $z^{[l]}_i$ is clearly a linear combination of $\mathbf{z}^{[l]}$. Thus, this case is proven.

    If $l'>l+1$, suppose $\forall i\in\{1,2,\dots,D_l\},j'\in\{1,2,\dots,D_{l'-1}\}, z^{[l]}_i=x^{[l'-1]}_{j'}$ can be constructed by some $\boldsymbol{\Theta}$, involving parameters $\boldsymbol{\theta}^{[l'']}$ where $l''<l$. Then $-z^{[l]}_i=x^{[l'-1]}_{j''}$ can also be constructed by $\boldsymbol{\theta}^{[l'-1]}_{\cdot j''}=-\boldsymbol{\theta}^{[l'-1]}_{\cdot j'}$ with some $j''\ne j'$, which must exist with sufficiently large $D_{l'-1}$. Since $j',j''$ are arbitrary, they can fall on dimensions with whichever activation function wanted. Here we suppose the activation functions at layer $l'-1$ are ReLU at position $j'$ and $j''$. Then, let

    \begin{equation}
        \label{eq:per-construct}
        \boldsymbol{\theta}^{[l']}_{ij}=\begin{cases}
            0 & \text{ if }i\notin\{j',j''\}\\
            1 & \text{ if }i=j'\\
            -1 & \text{ if }i=j''
        \end{cases}
    \end{equation}
    
    Then we have 

    \begin{align}
        x^{[l']}_j&=(\boldsymbol{\theta}^{[l']}_{\cdot j})^T\cdot \mathbf{z}^{[l'-1]}\\
        &=z^{[l'-1]}_{j'}-z^{[l'-1]}_{j''} & \text{by Equation~\ref{eq:per-construct}}\\
        &=R(x^{[l'-1]}_{j'})-R(x^{[l'-1]}_{j'}) \\
        &=R(z^{[l]}_i)-R(-z^{[l]}_i)& \text{by definition of }j', j''\\
        &=\begin{cases}
            z_i^{[l]}-0&\text{ if }z_i^{[l]}\ge0\\
            0-(-z_i^{[l]})&\text{ if }z_i^{[l]}<0
        \end{cases}\\
        &=z^{[l]}_i
    \end{align}

    Thus, by induction, the claim is proven.
\end{proof}

\begin{lemma}
    \textbf{Law of Redundant Input Dimensions.} If a function can be constructed by a network, then the same function taking some redundant input dimensions can be constructed by a network. Formally*h writing, if $f(\mathbf{x})$ with $\mathbf{x}\in\mathbb{R}^D$ can be constructed by a network, then $f'(\mathbf{x}')$ with $\mathbf{x}'\in\mathbb{R}^{D'}$ where $D'\ge D$ and $f'(\mathbf{x}')=f(\mathbf{x}'_{1..D})$ can be constructed by a network. 
\end{lemma}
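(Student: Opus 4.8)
The plan is to reuse the existing network for $f$ essentially unchanged, and merely widen its input layer while forcing the extra coordinates to have no influence. Concretely, suppose the network realizing $f$ has first-layer parameters $\theta^{[1]}_{ij}$ with $i\in\{0,1,\dots,D\}$ (the $i=0$ entry being the bias) and $j\in\{1,\dots,D_1\}$. I would construct a network on inputs $\mathbf{x}'\in\mathbb{R}^{D'}$ whose first-layer parameters $\theta'^{[1]}_{ij}$ satisfy $\theta'^{[1]}_{ij}=\theta^{[1]}_{ij}$ for $i\le D$ and $\theta'^{[1]}_{ij}=0$ for $D<i\le D'$, while every subsequent layer (its width, activations, and weights) is copied verbatim from the original network. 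Intuitively, the redundant dimensions $x'_{D+1},\dots,x'_{D'}$ are discarded by zero weights before any nonlinearity can act on them.

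By the Law of Linear Combinations in Next Layer (Lemma~\ref{thm:lc-next}) applied at the input layer $l=0$, the pre-activation at each first-layer node $j$ is exactly the linear combination $(\boldsymbol{\theta}'^{[1]}_{\cdot j})^T\mathbf{z}^{[0]}+\theta'^{[1]}_{0j}$ of the raw input $\mathbf{z}^{[0]}=\mathbf{x}'$. Since the coefficients of $x'_{D+1},\dots,x'_{D'}$ vanish, this linear combination equals the original network's first-layer pre-activation evaluated at $\mathbf{x}'_{1..D}$. I would then note that, because the two networks are identical from the first activation onward, every subsequent pre-activation, post-activation, and finally the output of the new network coincides with that of the original network run on $\mathbf{x}'_{1..D}$; a one-line induction on the layer index formalizes this if desired. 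Hence the new network computes $f(\mathbf{x}'_{1..D})=f'(\mathbf{x}')$, as required.

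There is no genuinely hard step here, since the construction amounts to appending zero weight rows. The only part warranting care is the formal index bookkeeping: confirming that the augmented first-layer weight matrix has dimensions consistent with a $D'$-dimensional input, while the widths $D_1,\dots,D_L$ of all later layers remain untouched so that the remaining parameters transfer without modification. Once this consistency is checked, the conclusion follows immediately from the vanishing of the redundant weights, and the lemma is established.
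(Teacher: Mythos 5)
Your proposal is correct and matches the paper's own proof essentially verbatim: both reuse the network for $f$ unchanged except for zero-padding the first-layer weights on the redundant input dimensions, so that $\mathbf{x}^{[1]}$ (and hence everything downstream) is unchanged. Your version simply spells out the layer-by-layer induction that the paper leaves implicit.
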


\begin{proof}
    The network for $f'$ can be directly adapted from the network for $f$, where all other layers' weights and biases remain the same except for the first layer, where additional first layer dimensions, and hence weights $\boldsymbol{\theta}^{[1]}_{(D+1..D')\cdot}$, are needed. Setting these additional weights to 0, then second layer's input, $\mathbf{x}^{[1]}$ can be maintained the same as the network for $f$. Thus, $f'(\mathbf{x}')=f(\mathbf{x}'_{1..D})$ can be easily constructed.
\end{proof}

\begin{thm}
    \textbf{Law of Composition.} If all values defined by $\mathbf{f}(\mathbf{x})=[f_1(\mathbf{x}),f_2(\mathbf{x}),\dots,f_{m}(\mathbf{x}')]$, for $m$ different functions on $\mathbf{x}\in\mathbb{R}^D$, can be constructed by the network, and $f(\mathbf{x}')$ with $\mathbf{x}'\in\mathbb{R}^m$ can be constructed by the network, then $f(\mathbf{f}(\mathbf{x}))$ can be constructed by the network, regardless of the layer and dimensions all functions can be constructed.
\end{thm}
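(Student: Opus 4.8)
The plan is to give a constructive gluing argument: assemble the $m$ inner subnetworks computing $f_1,\dots,f_m$ in parallel on the shared input $\mathbf{x}$, align all of their outputs to one common layer, and then stack the outer subnetwork for $f$ directly on top, reading $\mathbf{f}(\mathbf{x})$ as its input. Each of these three moves is licensed by one of the basic laws already established, so the proof is essentially a careful bookkeeping of indices rather than a new computation.

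First I would place the subnetworks for $f_1,\dots,f_m$ side by side inside a single network. Since the width is unbounded, I can assign each inner subnetwork a disjoint block of hidden dimensions at every layer and set all cross-block weights to zero, so the $m$ computations proceed independently while sharing the common input $\mathbf{x}$; by the \textbf{Law of Redundant Input Dimensions} this parallel hosting leaves every individual construction intact. Let $L_i$ denote the layer at which $f_i(\mathbf{x})$ appears as a post-activation output, and set $L^\star=\max_i L_i$. For every $i$ with $L_i<L^\star$ I would invoke the \textbf{Law of Layer Persistence} to carry that output forward unchanged to layer $L^\star$. After this step all $m$ values $f_1(\mathbf{x}),\dots,f_m(\mathbf{x})$ are simultaneously available as post-activation outputs occupying $m$ designated dimensions of layer $L^\star$; that is, the vector $\mathbf{f}(\mathbf{x})$ is realized as a restriction of $\mathbf{z}^{[L^\star]}$.

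Second I would append the outer subnetwork for $f$ on top of layer $L^\star$. That subnetwork was built on raw input $\mathbf{x}'\in\mathbb{R}^m$, and its very first operation is a linear, pre-activation map from $\mathbf{x}'$; in the composed network this becomes the linear map from $\mathbf{z}^{[L^\star]}$ to $\mathbf{x}^{[L^\star+1]}$. By Lemma~\ref{thm:lc-next} such a map is exactly what the next layer computes, so I would route the $m$ dimensions holding $\mathbf{f}(\mathbf{x})$ into the outer subnetwork with their original weights and assign zero weight to the remaining redundant dimensions of layer $L^\star$. Because $\mathbf{z}^{[L^\star]}$ is already a post-activation value and the following step is a pre-activation linear combination, no spurious activation is applied to $\mathbf{f}(\mathbf{x})$ before it enters $f$, so the appended structure matches the standalone outer subnetwork verbatim. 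The output of the composed network is therefore $f(\mathbf{f}(\mathbf{x}))$, and the original layer and dimension indices play no role, as required.

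The step I expect to be the main obstacle is the \emph{re-sourcing} of the outer subnetwork's input: I must argue that feeding the post-activation vector $\mathbf{f}(\mathbf{x})$ in place of the raw input $\mathbf{x}'$ is legitimate and introduces no extra nonlinearity. This is precisely where the clean separation in the model's notation between post-activation outputs $\mathbf{z}^{[l]}$ and pre-activation inputs $\mathbf{x}^{[l+1]}$ pays off, and where the Law of Layer Persistence is essential for first collecting all $f_i$ onto a single layer so that they can jointly serve as that input. Verifying that disjoint-block placement with zeroed cross-weights leaves each inner construction unchanged is routine given the Law of Redundant Input Dimensions and will not be belabored.
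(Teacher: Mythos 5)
Your overall skeleton --- host the inner subnetworks in parallel, use the Law of Layer Persistence to collect $f_1(\mathbf{x}),\dots,f_m(\mathbf{x})$ at a common layer, then graft the outer network for $f$ on top via the Law of Linear Combinations in Next Layer --- is the same as the paper's. But there is a genuine gap at precisely the step you yourself flagged as the main obstacle, and your proposal does not resolve it. The Law of Layer Persistence delivers a value as a \emph{pre-activation} input $x^{[l']}_j$ of a later layer; it does not make that value a \emph{post-activation} output $z^{[l']}_j$. Your claim that ``$\mathbf{f}(\mathbf{x})$ is realized as a restriction of $\mathbf{z}^{[L^\star]}$'' therefore does not follow: to appear inside $\mathbf{z}^{[L^\star]}$, each $f_i(\mathbf{x})$ must first pass through the activation assigned to its dimension at layer $L^\star$, and none of the available activations is the identity. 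In fact the claim is false in general: if $f_i(\mathbf{x})=-5$ (e.g.\ $f_i=\ln x_i$ with small $x_i$, exactly the case needed later for the polynomial theorem), then no coordinate of any post-activation vector can equal $-5$, since ReLU and NLReLU have range $[0,+\infty)$ and ELU with $\alpha=1$ has range $(-1,+\infty)$. Consequently the outer network's first linear layer, which by Lemma~\ref{thm:lc-next} can only read linear combinations of $\mathbf{z}^{[L^\star]}$, cannot be fed $\mathbf{f}(\mathbf{x})$ in the way you assert; asserting that ``no spurious activation is applied'' does not make the intervening activation layer disappear.

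The paper closes this gap with a sign-splitting device that your argument is missing: persist both $\mathbf{f}(\mathbf{x})$ and $-\mathbf{f}(\mathbf{x})$ into $2m$ pre-activation slots of the common layer $l$ (the negated copies cost only a sign flip of weights), assign ReLU to all of these dimensions, so that $\mathbf{z}^{[l]}$ contains $R(\mathbf{f}(\mathbf{x}))$ and $R(-\mathbf{f}(\mathbf{x}))$, and then reconstruct any required first-layer input $\boldsymbol{\theta}\cdot\mathbf{f}(\mathbf{x})+b$ of the outer network as
\begin{equation*}
\boldsymbol{\theta}\cdot\mathbf{f}(\mathbf{x})+b
=\boldsymbol{\theta}\cdot R(\mathbf{f}(\mathbf{x}))-\boldsymbol{\theta}\cdot R(-\mathbf{f}(\mathbf{x}))+b,
\end{equation*}
using the identity $R(t)-R(-t)=t$. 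This is the one computation your bookkeeping cannot avoid; it is also the same device that powers the proof of the Law of Layer Persistence itself, which is why that law can move signed values across activation boundaries while your direct reading of $\mathbf{z}^{[L^\star]}$ cannot. With this insertion your construction becomes the paper's proof; without it, the grafting step fails for any inner function taking negative values.
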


\begin{proof}
    Let $l$ be the maximum of minimum layer where $f_1,f_2,\dots,f_{m}$ can be constructed. Then, by Law of Layer Persistence, at layer $l$, $\mathbf{f}(\mathbf{x})$ can be constructed as part of its dimensions. Without loss of generality, let these dimensions be dimensions $1,2,\dots,m$. We will use standard notations on this network.

    Then, let $\boldsymbol{\theta}^{[l]}_{\cdot(m+1..2m)}=-\boldsymbol{\theta}^{[l]}_{\cdot(1..m)}$, we have $-\mathbf{f}(\mathbf{x})$ constructed at the same layer at dimensions $m+1,m+2,\dots,2m$. We can also suppose that all these dimensions are assigned the activation ReLU without loss of generality. Thus,

    \begin{equation}
        z^{[l]}_i=\begin{cases}
            f_i(\mathbf{x})&\text{ if }i\in\{1,2,\dots,m\}\\
            -f_{i-m}(\mathbf{x})&\text{ if }i\in\{m+1,m+2,\dots,2m\}\\
            0&\text{ if }i>2m
        \end{cases}
    \end{equation}

    In the network for $f$, on which we will use notations with $\cdot'$ to represent, every value in the first layer input, ${\mathbf{x}'}^{[1]}$, is a linear combination of input of $f$, $\mathbf{x}'$. To construct the composition, we need $\mathbf{x}'=\mathbf{f}(\mathbf{x})$. In other words, every value in ${\mathbf{x}'}^{[1]}$ is a linear combination of $\mathbf{f}(\mathbf{x})$. Without loss of generality, take an arbitrary value in ${\mathbf{x}'}^{[1]}$, and suppose it is $x=\boldsymbol{\theta}\cdot\mathbf{f}(\mathbf{x})+b$, then 

    \begin{align}
        x&=\boldsymbol{\theta}\cdot\mathbf{f}(\mathbf{x})+b\\
        &=\begin{pmatrix}
            \boldsymbol{\theta}\\-\boldsymbol{\theta}\\\mathbf{0}
        \end{pmatrix}\cdot\begin{pmatrix}
            R(\mathbf{f}(\mathbf{x}))\\R(-\mathbf{f}(\mathbf{x}))\\\mathbf{0}
        \end{pmatrix}+b\\
        &=\begin{pmatrix}
            \boldsymbol{\theta}\\-\boldsymbol{\theta}\\\mathbf{0}
        \end{pmatrix}\cdot\mathbf{z}^{[l]}+b
    \end{align}

    which means $x$ is also a linear combination of $\mathbf{z}^{[l]}$. Thus, ${\mathbf{x}'}^{[1]}$ is a valid $\mathbf{x}^{[l+1]}$ (after padding zeros).

    Then, by concatenating the network for $f_1,f_2,\dots,f_m$, $\begin{pmatrix}
        \boldsymbol{\theta}\\-\boldsymbol{\theta}\\\mathbf{0}
    \end{pmatrix}$, and network for $f$, and by Law of Redundant Input Dimensions to adjust the dimension alignment, $f(\mathbf{f}(\mathbf{x}))$ can be constructed by a network.
\end{proof}

\section{Appendix B: Experiment Setup}
\label{app:exp}

\subsection{Mathematical Expressions}

For all expressions, we generate 5000 samples, with 80\% as training data, and the rest 20\% as test data. Each sample's features are randomly generated based on some na\"ive rules, and target values are rigorously calculated based on the formulae. The purpose of generating these datasets are experiments to verify the effect of activation functions, so the feature values may not necessarily be the common value range in reality. The rules are randomly hand-crafted for giving sufficient variability in the input pattern.

In the following subsections, we use the notations in Table~\ref{tab:math-notation}.

\begin{table}[h!t]
    \centering
    \begin{tabular}{cp{0.75\linewidth}}
        \toprule
        Notation & \multicolumn{1}{c}{Description} \\
        \midrule
        $\mathcal{N}(\mu,\sigma)$ & Normal distribution with mean $\mu$ and standard deviation $\sigma$.\\
        $\mathcal{U}(a,b)$ & Uniform distribution between $[a,b]$. \\
        $\mathcal{V}(a.b)$ & Uniform distribution between $[a,b]$ but integers only. \\
        $\mathcal{D}(\mathbf{v},\mathbf{p})$ & Discrete distribution on values $\mathbf{v}$ on based on densities $\mathbf{p}$, where $\|\mathbf{p}\|_1=1$ and $\forall i,p_i\in[0,1]$. \\
        $\mathcal{D}_1^{\mathcal{D}_2}$ & Given two distributions $\mathcal{D}_1,\mathcal{D}_2$, value $v=a\times10^b$ where $a\sim\mathcal{D}_1,b\sim\mathcal{D}_2$. \\
        \bottomrule
    \end{tabular}
    \caption{Notation used in mathematical expression definitions.}
    \label{tab:math-notation}
\end{table}

\subsubsection{Gaussian Distribution Expression (GS)}

Gaussian distribution's~\cite{stats} is one of the most essential and well-known distributions in basic statistics theory, and is used widely in various different domains for quantitative analysis. Its probability density function is 

\begin{equation}
    f(x)=\frac{1}{\sigma\sqrt{2\pi}}e^{-\frac{1}{2}(\frac{x-\mu}{\sigma})^2}
\end{equation}

where $\mu$ is the mean and $\sigma$ is the standard deviation, contains some mathematical expressions mentioned previously. Moreover, the calculation of the mean and standard deviation using a set of sampled values also involve some basic calculation. Thus, we can construct a target variable for $f(x)$.

To be more specific, we construct independent variables as described in Table~\ref{tab:normal-features}.

\begin{table}[h!t]
    \centering
    \begin{tabular}{cp{0.65\linewidth}}
        \toprule
        Variable(s) & \multicolumn{1}{c}{Description} \\
        \midrule
        $x_1,x_2,\dots,x_8$ & Based on a randomly generated mean value $\mu\sim\mathcal{N}(0,10)$ and standard value $\sigma\sim\mathcal{U}(1,6)$, sample $x_1,x_2,\dots,x_8\sim\mathcal{N}(\mu,\sigma)$. \\
        $v$ & Based on the actual sample mean $\overline{x}$ and standard deviation $\sigma_x$ of $x_1,x_2,\dots,x_8$, sample $v\sim\mathcal{N}(\overline{x},\sigma_x)$. \\
        \bottomrule
    \end{tabular}
    \caption{Gaussian distribution expression's feature construction.}
    \label{tab:normal-features}
\end{table}

Then, based on the sample mean $\overline{x}$ and standard deviation $\sigma_x$, calculate $f(v)$ as the target variable to be predicted.

\subsubsection{Modified Arrhenius Equation (AR)}

Arrhenius equation~\cite{ar-base} is an important formula in physical chemistry about the relation between temperature and reaction rates of chemical reactions, which is given as

\begin{equation}
    k=Ae^{-\frac{E_a}{RT}}
\end{equation}

where $k$ is the rate constant, $T$ is the absolute temperature, $A$ is the pre-exponential factor (also known as Arrhenius factor or frequency factor), $E_a$ is the molar activation energy, and $R\approx8.314J\cdot mol^{-1}\cdot K^{-1}$ is the universal gas constant.

To make explicit the temperature dependence of the pre-exponential factor, the modified Arrhenius equation~\cite{ar-math} is given as follows:

\begin{equation}
    k=AT^ne^{-\frac{E_a}{RT}}
\end{equation}

where $n$ describes this relation. 

We take the left hand side of the equation as the target to predict, and right hand side as the expression with independent variables. Detailed construction of the independent variables is given in Table~\ref{tab:arr-features}.

\begin{table}[h!t]
    \centering
    \begin{tabular}{cp{0.7\linewidth}}
        \toprule
        Variable & Description \\
        \midrule
        $n$ & Randomly generated $n\sim V(0,10)$. \\
        $T$ & Randomly generated $T\sim\mathcal{U}(1,11)$. \\
        $E_a$ & Randomly generated $E_a\sim\mathcal{U}(0,100)$. \\
        $A$ & Randomly generated $A\sim\mathcal{U}(0,1)^{\mathcal{V}(-2, 1)}$. \\
        \bottomrule
    \end{tabular}
    \caption{Modified Arrhenius equation's feature construction.}
    \label{tab:arr-features}
\end{table}

\subsubsection{3D Steady-State Vortex Solution of Naiver-Stokes Equation (NS)}

Naiver-Stokes equations~\cite{dyn-base} are important equations in fluid dynamics describing the motion of viscous fluid substances that expresses momentum balance of Newtonian fluids under conservation of mass.  These equations are a set of partial differential equations, which is not suitable for our experiment purpose. However, its solutions under certain special circumstances can be constructed by complex but suitable mathematical expressions. 

A steady-state vortex solution in 3D for non-viscous gas, whose density, velocities, and pressure goes to zero far from the origin, on the velocity is given as follows~\cite{ns-math}.

\begin{align}
    \mathbf{u}(x,y,z)&=\frac{A}{(r^2+x^2+y^2+z^2)^2}\begin{pmatrix}
        2(-ry+xz)\\2(rx+yz)\\r^2-x^2-y^2+z^2
    \end{pmatrix}
\end{align}

where $x,y,z$ are the positions in 3D coordinate, $r$ is the constant radius of the inner coil, $\mathbf{u}$ is the flow velocity, and $A$ is an arbitrary constant.

We calculate the Euclidean norm of the flow velocity of the solution, which is $\|\mathbf{u}\|$, as the output target. In other words, the mathematical expression to be predicted is 

\begin{multline}
    \|\mathbf{u}\|=\frac{A}{(r^2+x^2+y^2+z^2)^2}\cdot\\\sqrt{(2(-ry+xz))^2+(2(rx+yz))^2+(r^2-x^2-y^2+z^2)^2}
\end{multline}

Note that the solution is actually a solution for a differential function, where $A$ is an arbitrary constant, but existence of arbitrary values in the expression makes deterministic prediction for the purpose of our experiment impossible. Thus, we fix their values, so that they are also considered independent variables. Details of how these values are sampled is given in Table~\ref{tab:ns-features}.

\begin{table}[h!t]
    \centering
    \begin{tabular}{cp{0.7\linewidth}}
        \toprule
        Variable(s) & Description \\
        \midrule
        $A$ & Randomly generated $A\sim U(0,1)$. \\
        $r,x,y,z$ & Randomly generated $r,x,y,z\sim\mathcal{U}(1,10)^{\mathcal{D}([-3,-2,-1,0,1],[.1,.2,.4,.2,.1])}$. \\
        \bottomrule
    \end{tabular}
    \caption{Naiver-Stokes equation steady-state vortex solution's feature construction.}
    \label{tab:ns-features}
\end{table}

\subsubsection{Black-Scholes Formula (BS)}

Black-Scholes model, also known as Black-Scholes-Merton model~\cite{bs-math}, is a well-known mathematical model for a financial market in estimating prices. Its foundation is Black-Scholes Formula, which calculates the price of a put option based on put-call parity with discount factor $e^{-r(T-t)}$:

\begin{equation}
    P(S_t,t)=Ke^{-r(T-t)}-S_t+C(S_t,t)
\end{equation}

where $t$ is the time, $T$ is the time of option expiration, $S_t$ is the price of the underlying asset at time $t$, $P(S_t,t)$ is the price of an European put option, $K$ is the strike price, also known as the the exercise price, and $C(S_t,t)$ is the value of an European call option, which is calculated by

\begin{align}
    C(S_t,t)&=N\left(\frac{\ln(\frac{S_t}{K})+(r+\frac{\sigma^2}{2})(T-t)}{\sigma\sqrt{T-t}}\right)S_t\\
    &\quad-N\left(\frac{\ln(\frac{S_t}{K})+(r-\frac{\sigma^2}{2})(T-t)}{\sigma\sqrt{T-t}}\right)Ke^{-r(T-t)}
\end{align}

where $N(\cdot)$ denotes the standard normal cumulative distribution function

\begin{equation}
    N(x)=\frac{1}{\sqrt{2\pi}}\int_{-\infty}^xe^{-\frac{z^2}{2}}dz
\end{equation}

which has no closed form integral result.

Details of feature construction for the Black-Scholes formula expression can be found in Table~\ref{tab:bs-features}.

\begin{table}[h!t]
    \centering
    \begin{tabular}{cp{0.7\linewidth}}
        \toprule
        Variable & Description \\
        \midrule
        $\sigma$ & Randomly generated $\sigma\sim U(0,100)$. \\
        $T-t$ & Randomly generated $(T-t)\sim\mathcal{V}(1,9)^{\mathcal{V}(1,3)}$. \\
        $S_t$ & Randomly generated $S_t\sim\mathcal{U}(1,10)^{\mathcal{V}(0,4)}$. \\
        $K$ & Randomly generated $K\sim\mathcal{U}(1,10)^{\mathcal{V}(0,4)}$, where the exponential part of the distribution, instead of being independent of, is the same value as $S_t$. \\
        $r$ & Randomly generated $r\sim U(0,0.1)$. \\
        \bottomrule
    \end{tabular}
    \caption{Black-Scholes formula's feature construction.}
    \label{tab:bs-features}
\end{table}

\subsection{Model and Training Setup}

All experiments are run with batch size 500 and learning rate $5\times10^{-4}$ with Adam optimizer. MLPs are implemented with dropout rate 0.1.

For all mathematical expressions, 200 epochs are trained.

Note that the sizes of all datasets mentioned in Table~\ref{tab:datasets} vary a lot, and it makes sense to vary the model sizes accordingly. We use two MLP settings: a smaller one and a larger one, where the smaller one is used for datasets with less than 1000 rows, and the larger one is used for datasets with more than 1000 rows. The smaller MLP consists of 3 layers with 128 hidden dimensions, and the larger MLP consists of 6 layers with 256 hidden dimensions. All MLPs are implemented with a dropout rate 0.1.

Smaller models are trained for 500 epochs, and larger models are trained for 200 epochs. 

\section{Appendix C: Supplementary Experiment Results}
\label{app:sup}

\subsection{Loss and Performance Trend on Training Set}

\begin{figure*}[h!t]
    \centering
    \includegraphics[width=\linewidth]{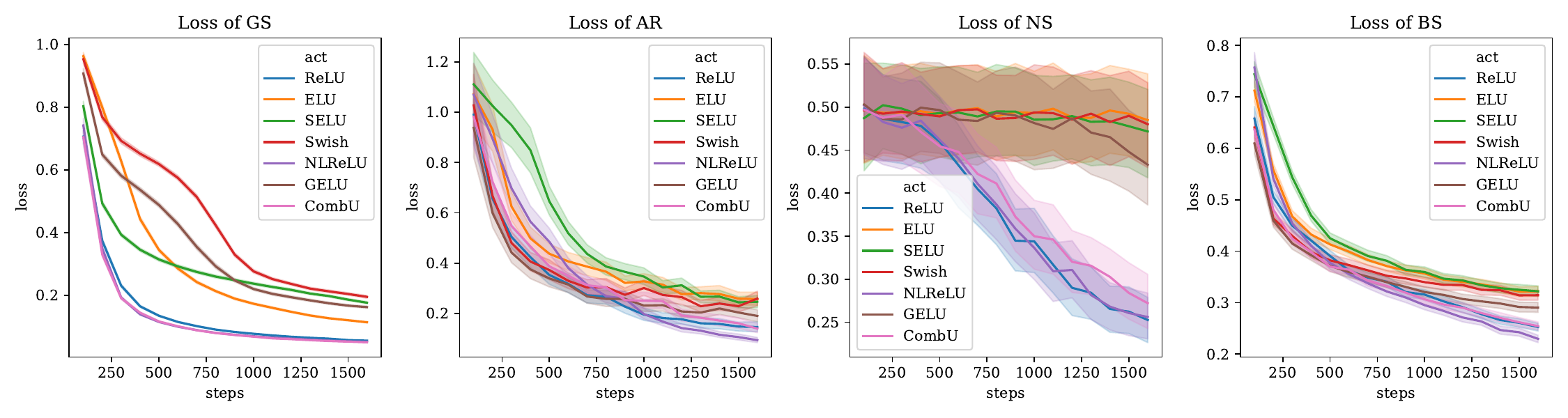}
    \caption{Training loss trend of different activation functions on mathematical expression regression experiments. The training losses are MSE losses, and average over each 100 steps of all 5 runs are drawn. The first two epochs are skipped to make the difference of later trend easier to see.}
    \label{fig:formula-reg-loss}
\end{figure*}

\begin{figure*}[h!t]
    \centering
    \includegraphics[width=\linewidth]{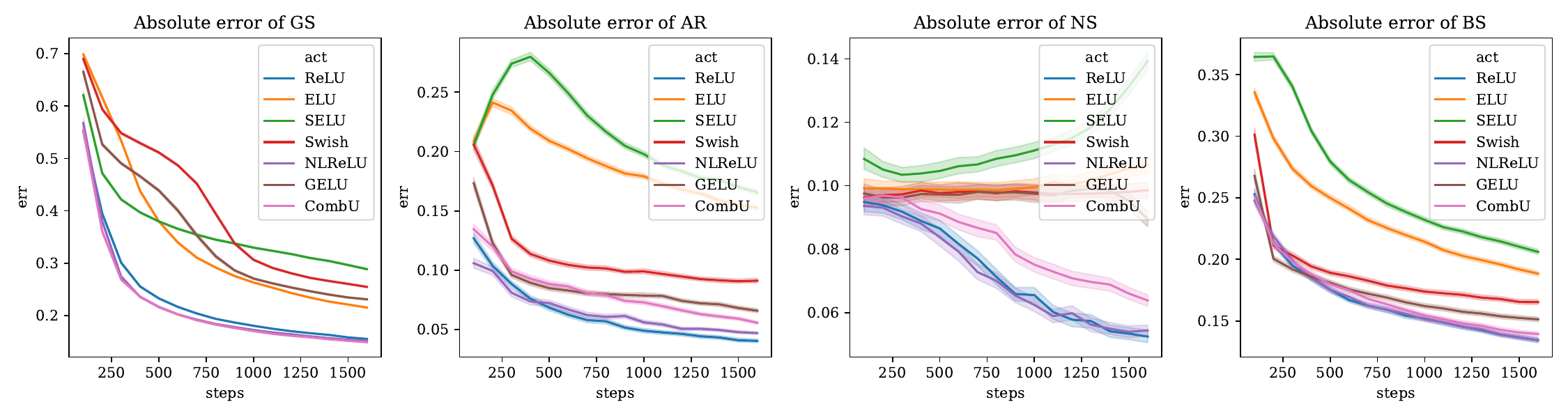}
    \caption{Training absolute error trend of different activation functions on mathematical expression regression experiments. Average over each 100 steps of all 5 runs are drawn. The first two epochs are skipped to make the difference of later trend easier to see.}
    \label{fig:formula-reg-err}
\end{figure*}

\begin{figure*}[h!t]
    \centering
    \includegraphics[width=\linewidth]{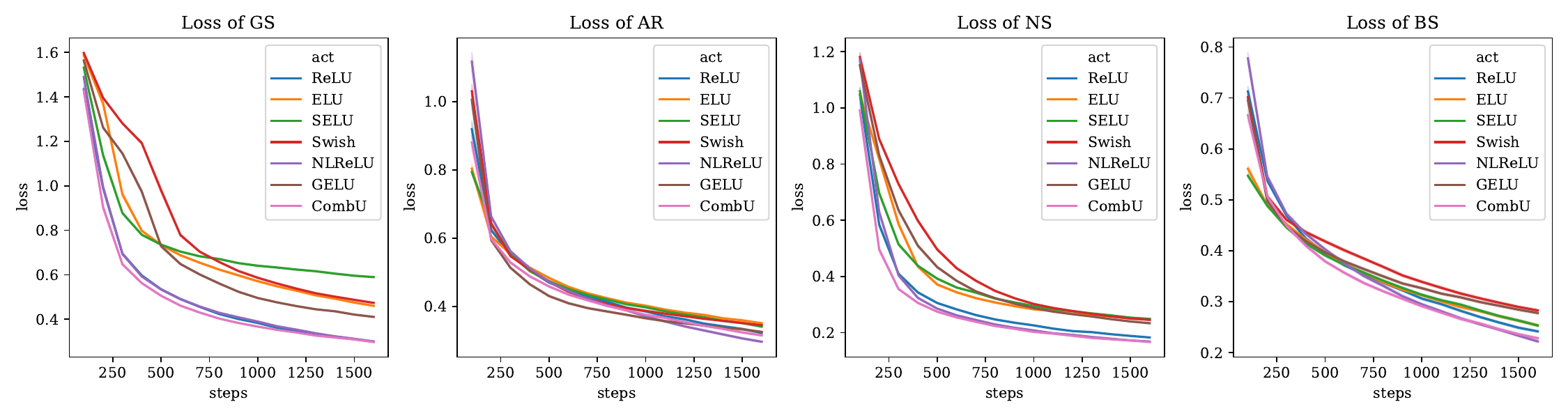}
    \caption{Training absolute error trend of different activation functions on mathematical expression classification experiments. The training losses are cross entropy, and average over each 100 steps of all 5 runs are drawn. The first two epochs are skipped to make the difference of later trend easier to see.}
    \label{fig:formula-clf-loss}
\end{figure*}

\begin{figure*}[h!t]
    \centering
    \includegraphics[width=\linewidth]{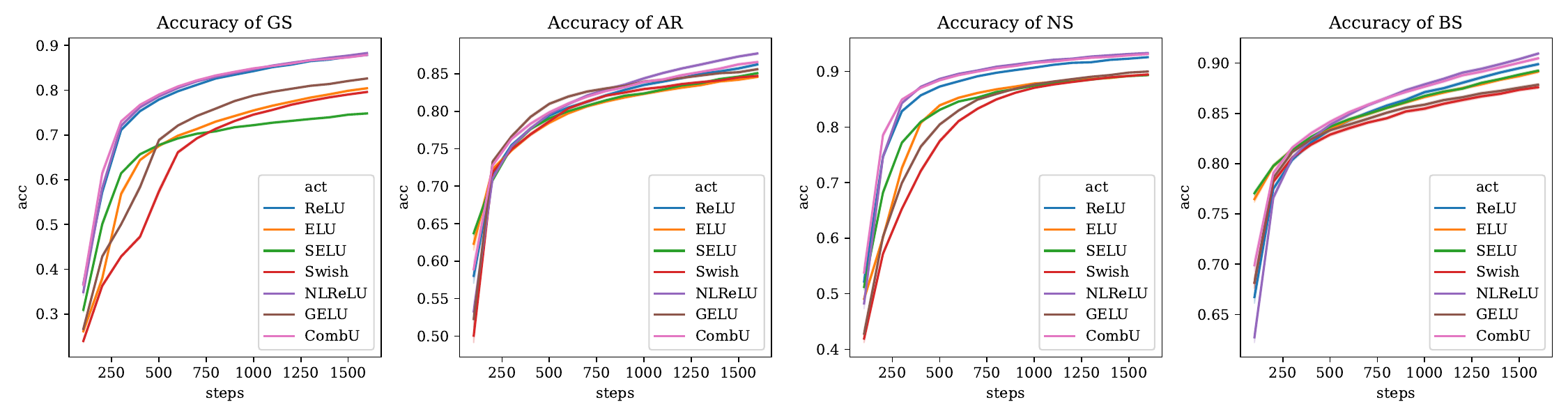}
    \caption{Training accuracy trend of different activation functions on mathematical expression classification experiments. Average over each 100 steps of all 5 runs are drawn. The first two epochs are skipped to make the difference of later trend easier to see.}
    \label{fig:formula-clf-acc}
\end{figure*}

The trends of losses, absolute error, and accuracy during training process for mathematical expressions are shown in Figure~\ref{fig:formula-reg-loss}-\ref{fig:formula-clf-acc}. The loss trends indicates that all activation functions at least makes the MLPs valid, so that losses are gradually decreasing. Also, there is no fixed pattern of training performances, for all different experiments, but CombU is usually among the best, and never appear as one of the worst.

\subsection{Experiments on Image Classification}

We also tried CombU on other deep neural networks to see its effect. A natural intuitive of such experiments is that however complex the task is, there must be some mathematical relation embedded in the relation of the features to the target.

\begin{table*}[h!t]
    \small
    \setlength{\tabcolsep}{2pt}
    \centering
    \begin{tabular}{ccccccccc}
        \toprule
        Dataset & Model & ReLU & ELU & SELU & Swish & NLReLU & GELU & CombU\\% & WA \\
        \midrule
         & CNN & $67.50\pm0.20$ & $\mathbf{70.98\pm0.65}$ & $\mathbf{70.77\pm0.16}$ & $68.74\pm0.56$ & $65.90\pm0.82$ & $70.11\pm0.20$ & $68.31\pm0.31$\\% & 67.97 \\
        CIFAR10 & ResNet18 & $\mathbf{91.71\pm0.20}$ & $91.28\pm0.19$ & $90.46\pm0.24$ & $91.61\pm0.30$ & $90.43\pm0.40$ & $\mathbf{91.77\pm0.28}$ & $91.23\pm0.15$\\% & 91.28 \\
         & ResNet101 & $91.40\pm0.11$ & $91.24\pm0.98$ & $90.79\pm0.37$ & $\mathbf{91.94\pm0.38}$ & $48.10\pm2.74$ & $\mathbf{91.94\pm0.07}$ & $91.08\pm0.45$\\% & 80.54 \\
        \midrule
         & CNN & $34.34\pm0.28$ & $37.83\pm0.58$ & $\mathbf{38.97\pm0.53}$ & $35.22\pm0.60$ & $27.01\pm0.36$ & $35.31\pm0.21$ & $33.63\pm0.67$\\% & 33.38\\
        CIFAR100 & ResNet18 & $69.01\pm0.21$ & $69.66\pm0.12$ & $68.44\pm0.31$ & $\mathbf{70.34\pm0.30}$ & $65.96\pm0.56$ & $69.98\pm0.36$ & $69.13\pm0.37$\\% & 68.41 \\
         & ResNet101 & $69.63\pm0.47$ & $70.46\pm0.10$ & $70.15\pm0.24$ & $\mathbf{71.43\pm0.71}$ & $10.91\pm1.17$ & $70.14\pm0.09$ & $67.55\pm0.75$\\% & 55.16\\
        \bottomrule 
    \end{tabular}
    \caption{Performances of different activation functions on CIFAR classification datasets. Metrics collected are accuracy in \%. Model CNN refers to the simple CNN model. WA stands for the weighted average of the (mean) performance over the activation functions involved in CombU.}
    \label{tab:cifar}
\end{table*}

We run CIFAR10 and CIFAR100 classification tasks~\cite{cifar} on simple CNN as given in the PyTorch image classification tutorial\footnote{The tutorial from \url{https://pytorch.org/tutorials/beginner/blitz/cifar10_tutorial.html}.}, and ResNet18 and ResNet101~\cite{resnet} by 100, 80, and 50 epochs each, with learning rate $2\times10^{-4}$, weight decay $1\times10^{-5}$ and batch size 128. The result of shown in Table~\ref{tab:cifar}. The results are collected from 3 runs.

One obvious conclusion unrelated to combinations is that NLReLU obviously does not fit for these tasks, as it is the worst in all cases, and is significantly bad in ResNet101. A possible reason for this is that the input values to the activation values are much larger than tabular MLPs, making the gradient very small since it is the gradient of a logarithmic expression. 

The results show that CombU does not outperform other activation functions. We suspect three major reasons for this outcome: complex target, deep network, and unstructured data features. 

\begin{itemize}
    \item \textbf{Bad Performance for Single Activation.} The NLReLU is having very bad performance overall, which makes the combined activation performing bad.
    \item \textbf{Complex target.} When the explicit mathematical expressions involved in these complex tasks, if they exist, get too complex, either too deeply nested that the number of layers of the network is insufficient, or involving some expressions not captured by the current CombU, such as trigonometric or some integrals without closed form, CombU may not be able to show its advantage by fitting mathematical expressions. 
    \item \textbf{Deep network.} When the network gets more complex and deep, whether the mathematical relation is found becomes less important, as whichever activation function would eventually lead to some piece-wise relation from features to target, and when the network is complex enough, the number of pieces is large enough to estimate the relation. 
    \item \textbf{Unstructured data features.} For less structured data features (for example, images and text), typically fewer outliers are found in the data, and all features are likely less skewed than raw tabular data. Consequently, the importance of finding a perfect mathematical expression eclipses, since piece-wise estimations would do the job equally well empirically.
\end{itemize}

% Nevertheless, this does not mean that CombU has no contribution in such tasks. Combining different activation functions still show a $1+1>2$ effect. From the results, we can see that CombU's performance is better than the weighted average of the result of different activation functions it involves, and hence by combining the best-performing activation functions in CombU, there is still a potential of outperforming all other activation functions. Yet indeed, if the best-performing uniform activation function outperforms the runner-up too much, combination of different activation functions may not lead to better-performing results.

% The fact that combining different activation functions leads to better performance than their weighted average makes sense theoretically, as the neural network can select which linear combination of previous layer's output is better-fitted for which activation function and then direct the result to a neuron assigned with this activation function, to make full use of the advantage of each different activation functions.

\section{Appendix D: Code Implementation of CombU}

The code of the implementation of CombU is quite simple and straight-forward, so instead of putting it on GitHub, we directly put the code here.

\begin{lstlisting}[language=Python,basicstyle=\scriptsize]
import numpy as np
import torch
from typing import *
from torch import nn

# Register activation functions
SimpleActivation = Literal["relu", "elu", "selu", ...]
ACTIVATIONS = {
  "relu": nn.ReLU,
  "elu": nn.ELU,
  "selu": nn.SELU,
  ...
}

def make_simple(
    act_type: SimpleActivation, *args, **kwargs
) -> nn.Module:
  act = ACTIVATIONS[act_type]
  return act(*args, **kwargs)

class CombU(nn.Module):
  def __init__(
    self, ratio: Dict[SimpleActivation, float],
    dim: Union[int, Sequence[int]],
    act_args: Dict[SimpleActivation, Dict]):
    """
    Args:
      ratio: ratio of each activation, sum to 1.
      dim: shape of the input. 
      act_args: additional args for each activation.
    """
    super().__init__()
    self.ratio = ratio

    # Prepare activations
    self.activations = nn.ModuleDict({
      k: make_simple(k, **act_args.get(k, {})) 
      for k in ratio
    })

    # Calculate dimensions
    base_dim = dim if isinstance(dim, int) else dim[0]
    dims = {
      k: round(v * base_dim) for k, v in ratio.items()
    }
    diff = base_dim - sum(dims.values())
    for k, v in dims.items():
      if diff == 0:
        break
      elif diff > 0:
        dims[k] = v + diff
        diff = 0
      else:
        change = min(v, -diff)
        dims[k] = v - change
        diff += change

    # Assign dimensions
    self.dims = {}
    masked = np.zeros(dim, dtype=np.bool_)
    for k, v in dims.items():
      m = np.random.choice(
        np.arange(base_dim)[~masked],
        v, replace=False
      )
      self.dims[k] = m
      masked[m] = True

  def forward(self, x: torch.Tensor):
    activated_x = torch.empty_like(x)
    for k, v in self.dims.items():
      ax = self.activations[k](x[:, v])
      activated_x[:, v] = ax
    return activated_x
\end{lstlisting}

Note that unlike typical activation functions, CombU need to know the dimension in advance. Also, if one wants to save the weights and load from some checkpoints to resume on other tasks, it is important to save and load the assigned dimensions or the random seed at the start of constructing the activation layer, otherwise the network make produce unexpected results.

In the case of multi-dimension input to activation layers (e.g. activation layers between CNN layers), we consider the size of the first dimension (e.g. channel of images) as $D$ and the activation function is applied uniformly on all components in the same dimension. Although flattening all dimensions as $D$ would also work in theory, it does not make sense for CNN. This is because the same network weights are applied to different input dimensions for CNN, but if different dimensions are activated using different functions, the convolution layer's weight would thus become meaningless. Therefore, we unify the activation function on the same channel.

\end{document}